%% file: lipics-main.tex
\documentclass[a4paper,UKenglish,cleveref,autoref,final]{lipics-v2021}

\usepackage[disable]{todonotes}

\hideLIPIcs  

\input{macros-lipics.tex}
\usepackage{cleveref}
\usepackage{pgfplots}
\pgfplotsset{compat=1.18}

\title{On Counting Graphs for Modal Logic with Counting} 

\author{Avijeet {Ghosh}}{MC2, LIP, ENS de Lyon, France \and \url{https://itsmeavi.github.io/} }{avi.ghosh23@gmail.com}{}{}

\author{Fran{\c{c}}ois Schwarzentruber}{MC2, LIP, ENS de Lyon, France \and \url{https://perso.ens-lyon.fr/francois.schwarzentruber/}}{francois.schwarzentruber@ens-lyon.fr}{}{}

\authorrunning{A. Ghosh and F. Schwarzentruber} 

\Copyright{Avijeet Ghosh and Fran{\c{c}}ois Schwarzentruber} 

\ccsdesc[100]{} 

\keywords{graph neural networks,
counting problem,
modal logic with counting} 

\category{} 

\relatedversion{} 

\nolinenumbers 

\EventEditors{John Q. Open and Joan R. Access}
\EventNoEds{2}
\EventLongTitle{42nd Conference on Very Important Topics (CVIT 2016)}
\EventShortTitle{CVIT 2016}
\EventAcronym{CVIT}
\EventYear{2016}
\EventDate{December 24--27, 2016}
\EventLocation{Little Whinging, United Kingdom}
\EventLogo{}
\SeriesVolume{42}
\ArticleNo{23}

\begin{document}

\maketitle

\begin{abstract}
In this paper, we study the theoretical computational complexity of counting trees and graphs that are classified positively by a graph neural networks, or said equivalently that satisfy a given formula of modal logic with counting. We also study convergence. 
\end{abstract}

\newpage
\section{Introduction}
\label{section:introduction}
\input{section_introduction.tex}

\section{Related Work}
\label{section:relatedwork} 
\input{section_relatedwork.tex}

\section{Background and Preliminaries}
\label{section:background}
\input{section_background}

\section{Complexity of Counting models}
\label{section:countingcomplexity}
\input{section_counting.tex}

\section{Theoretical results at infinity}
\label{section:convergence}
\input{section_convergence.tex}

\section{Open problems}
\label{section:conclusion}
\input{section_conclusion.tex}

\bibliography{biblio}

\newpage
\appendix

\input{appendix.tex}

\end{document}

%% file: macros-lipics.tex
\usepackage{ifthen}
\usepackage{adjustbox}
\usepackage{comment}
\usepackage{multicol}
\usepackage{blkarray}
\usepackage{bussproofs}
\usepackage{tikz}
\usetikzlibrary{automata}
\usepackage{xcolor}
\usepackage{graphicx}
\usepackage{amsfonts}
\usepackage{amsmath}
\usepackage{amsthm}
\usepackage{hyperref}
\usepackage[ruled,vlined,linesnumbered]{algorithm2e}
\usepackage{algorithmic}
\usepackage{thmtools}

\usetikzlibrary{decorations.markings}
\usetikzlibrary{calc,patterns,angles,quotes,shapes.geometric,positioning}

\newcommand\indexalert\emph

\newcommand{\AP}{\ensuremath{\mathcal{P}}}

\newcommand{\M}{\mathcal{M}}

\newcommand{\Sharp}{\#}

\protected\def\MSO{\ifmmode \mbox{\sc MSO} \else {\sc MSO}\xspace\fi}
\protected\def\FO{\ifmmode \mbox{\sc FO} \else {\sc FO}\xspace\fi}
\protected\def\MMSO{\ifmmode {\mbox{\sc M}\MSO} \else {{\sc M}\MSO}\xspace\fi}
\protected\def\MFO{\ifmmode \mbox{\sc MFO} \else {\sc MFO}\xspace\fi}

\newcommand{\union}{\cup}

\newcommand{\set}[1]{\left\{#1\right\}}

\newcommand{\suchthat}{\mid}

\newcommand{\bigand}{\bigwedge}
\newcommand{\lbigand}{\bigand}

\newcommand{\bigor}{\bigvee}
\newcommand{\lbigor}{\bigor}

\newcommand{\limply}{\rightarrow}

\renewcommand{\phi}{\varphi}

\newcommand{\lbox}{\square}

\newcommand{\ldiamond}{\Diamond}
\newcommand{\tuple}[1]{\langle #1 \rangle}

\newcommand{\logdiamarg}[1]{\langle #1 \rangle}
\newcommand{\ldiaarg}[1]{\logdiamarg{#1}}

\newcommand{\setN}{\mathbb{N}}

\newcommand{\ensN}{\setN}

\newtheorem{fact}[theorem]{Fact}

\newcommand{\algofunction}{\textbf{function }}

\newcommand{\algoprocedure}{\textbf{procedure }}

\newcommand{\algofor}{\textbf{for }}
\newcommand{\algodo}{\textbf{do }}

\definecolor{algocommentbackgroundcolor}{rgb}{1,1,0.5}

\newcommand{\algowhile}{\textbf{while }}

\newcommand{\algoif}{\textbf{if }}
\newcommand{\algothen}{\textbf{then }}

\newcommand{\algomatch}{\textbf{match }}

\newcommand{\algocase}{\textbf{case }}

\newcommand{\algoaccept}{\textbf{accept }}
\newcommand{\algoreject}{\textbf{reject }}

\newlength{\algoindentlongueur}
\newcommand{\algoindent}{\hspace*{\algoindentlongueur}}

\newlength{\algoindentavantvrulelongueur}
\newcommand{\algoindentavantvrule}{\hspace*{\algoindentavantvrulelongueur}}

\newlength{\dummy}

\newsavebox{\frameminipageboiteavecunnomsuperlongdesortequonnepuissepaslereutiliser}
\newenvironment{frameminipage}[2][c]{%
\begin{lrbox}{\frameminipageboiteavecunnomsuperlongdesortequonnepuissepaslereutiliser}%
\begin{minipage}[#1]{#2}%
} {%
\end{minipage}%
\end{lrbox}%
\framebox{\usebox{\frameminipageboiteavecunnomsuperlongdesortequonnepuissepaslereutiliser}}%
}

\newenvironment{algobloc}{\setlength{\dummy}{\linewidth}\addtolength{\dummy}{- \algoindentlongueur}\addtolength{\dummy}{- \algoindentavantvrulelongueur}\algoindentavantvrule\vrule\algoindent\begin{minipage}{\dummy}}{\end{minipage}}

\tikzstyle{zoneprogramcounter} = [fill=gray!40, draw=none]
\tikzstyle{zonedatacounter} = [fill=gray!40, draw=none, decorate, decoration={snake, amplitude=1pt, segment length=4pt}]
\tikzstyle{portion} = [densely dotted, shade, top color = white, bottom color = gray!40]
\tikzstyle{portiondata} = [densely dotted, shade, top color = white, bottom color = gray!40, decorate, decoration={snake, amplitude=1pt, segment length=4pt}]
\tikzset{
	double arrow/.style args={#1 colored by #2 and #3}{
		-stealth,line width=#1,#2, 
		postaction={draw,-triangle 90 cap,#3,line width=(#1)-4pt,
			shorten <=4pt,shorten >=4}, 
	}
}
\tikzstyle{tapearrow} = [double arrow=10pt colored by black!50!white and black!20!white, -triangle 90 cap, fill = white]%
\tikzstyle{execarrow} = [line width=1pt,->]%

\tikzstyle{world}=[inner sep=0.5mm]
\tikzstyle{event}=[fill=gray!30, inner sep=0.5mm]
\tikzstyle{realworldarrowfromleft} = [initial left, initial text={}]
\tikzstyle{realworldarrowfromright} = [initial right, initial text={}]

\protected\def\ZPP{\ifmmode \mbox{\sc ZPP} \else {\sc ZPP}\xspace\fi}
\protected\def\RP{\ifmmode \mbox{\sc RP} \else {\sc RP}\xspace\fi}
\protected\def\coRP{\ifmmode \mbox{\sc coRP} \else {\sc coRP}\xspace\fi}
\protected\def\DTIME{\ifmmode \mbox{\sc Dtime} \else {\sc Dtime}\xspace\fi}
\protected\def\NTIME{\ifmmode \mbox{\sc Ntime} \else {\sc Ntime}\xspace\fi}
\protected\def\DSPACE{\ifmmode \mbox{\sc Dspace} \else {\sc Dspace}\xspace\fi}
\protected\def\NSPACE{\ifmmode \mbox{\sc Nspace} \else {\sc Nspace}\xspace\fi}
\protected\def\NP{\ifmmode \mbox{\sc NP} \else {\sc NP}\xspace\fi}
\protected\def\coNP{\ifmmode \mbox{\sc coNP} \else {\sc coNP}\xspace\fi}
\protected\def\NPSPACE{\ifmmode \mbox{\sc NPspace} \else {\sc NPspace}\xspace\fi}
\protected\def\PSPACE{\ifmmode \mbox{\sc Pspace} \else {\sc Pspace}\xspace\fi}
\protected\def\EXPSPACE{\ifmmode \mbox{\sc Expspace} \else {\sc Expspace}\xspace\fi}
\protected\def\TWOEXPSPACE{\ifmmode \mbox{\sc 2Expspace} \else {\sc 2Expspace}\xspace\fi}
\protected\def\PTIME{\ifmmode \mbox{\sc P} \else {\sc P}\xspace\fi}
\protected\def\NPTIME{\ifmmode \mbox{\sc NP} \else {\sc NP}\xspace\fi}
\protected\def\EXPTIME{\ifmmode \mbox{\sc Exptime} \else {\sc Exptime}\xspace\fi}
\protected\def\AEXPTIME{\ifmmode \mbox{\sc Aexptime} \else {\sc Aexptime}\xspace\fi}
\protected\def\NEXPTIME{\ifmmode \mbox{\sc NExptime} \else {\sc NExptime}\xspace\fi}
\protected\def\2EXPTIME{\ifmmode \mbox{\sc 2-Exptime} \else {\sc
		2-Exptime}\xspace\fi}
\DeclareRobustCommand{\kEXPTIME}[1][k]{\ifmmode \mbox{\sc $#1$-Exptime}
	\else {\sc $#1$-Exptime}\xspace\fi}
\DeclareRobustCommand{\kNEXPTIME}[1][k]{\ifmmode \mbox{\sc $#1$-NExptime}
	\else {\sc $#1$-NExptime}\xspace\fi}
\DeclareRobustCommand{\kEXPSPACE}[1][k]{\ifmmode \mbox{\sc $#1$-Expspace}
	\else {\sc $#1$-Expspace}\xspace\fi}
\protected\def\ELEMENTARY{\ifmmode \mbox{\sc Elementary} \else {\sc Elementary}\xspace\fi}
\protected\def\AEXPpol{\ifmmode \mbox{{\sc A}_{\text{pol}}\EXPTIME} \else
	{\sc A}$_{\text{pol}}$\EXPTIME\fi}
\protected\def\APTIME{\ifmmode \mbox{\sc Aptime} \else {\sc Aptime}\xspace\fi}
\protected\def\AEXPSPACE{\ifmmode \mbox{\sc Aexpspace} \else {\sc Aexpspace}\xspace\fi}

\tikzstyle{cell} = [draw,minimum height=5mm,minimum width=5mm]
\tikzset{
	cellcolor/.cd,
	0/.style={fill=gray!20!white},
	1/.style={fill=yellow!20!white}
}
\tikzstyle{cellalive} = [fill=yellow!20!white]

\newcommand\listeventsempty\epsilon

\newcommand{\initialword}{\omega}

\definecolor{C}{rgb}{1,1,1}
\definecolor{C }{rgb}{0.8,0.8,0.8}

\definecolor{Ca}{rgb}{0.95,1,0.5}
\definecolor{Cb}{rgb}{1,0.9,0.5}
\definecolor{Cw_2}{rgb}{0.95,1,0.5}
\definecolor{Cw_n}{rgb}{1,0.9,0.5}

\definecolor{Cq'-}{rgb}{0.5,0.5,1}
\definecolor{C-q_0}{rgb}{0.5,0.5,1}
\definecolor{C-q_1}{rgb}{0.5,0.4,1}
\definecolor{C-q'}{rgb}{0.5,0.8,0.5}
\definecolor{Cq_f-}{rgb}{0.4,0.9,0.5}

\definecolor{Cq_0,w_1}{rgb}{1,0.6,0.5}
\definecolor{Cq_0,}{rgb}{1,0.6,0.4}
\definecolor{Cq_2,}{rgb}{0.8,0.9,0.4}
\definecolor{Cq_2,a}{rgb}{0.8,0.9,0.4}
\definecolor{Cq_f,a}{rgb}{0.7,0.9,0.4}
\definecolor{Cq_0,a}{rgb}{1,0.6,0.5}
\definecolor{Cq_0,b}{rgb}{1,0.5,0.5}
\definecolor{Cq_1,a}{rgb}{1,0.5,0.4}
\definecolor{Cq_1,b}{rgb}{1,0.4,0.4}
\definecolor{Cq,a}{rgb}{1,0.5,0.5}
\definecolor{Cq',a}{rgb}{1,0.5,0.5}
\definecolor{Cq',b}{rgb}{1,0.5,0.5}
\definecolor{Cq_0,1}{rgb}{0.5,0.5,0.5}

\provideboolean{student}

\newcommand{\trou}[1]{\ifthenelse{\boolean{student}}
	{\colorbox{gray!10!white}{\phantom{#1}}}
	{#1}
}

\ifthenelse{\boolean{student}}
{\excludecomment{hidden}}
{}

\usepackage{mdframed}

\newmdenv{allfour}

\newmdenv[rightline=false,topline=false,bottomline=false]{solutionbox}

\ifthenelse{\boolean{student}}
{\excludecomment{solution}}
{
}

\ifthenelse{\boolean{student}}
{\excludecomment{examplehidden}}
{}

\ifthenelse{\boolean{student}}
{\excludecomment{exercicehidden}}
{}

\ifthenelse{\boolean{student}}
{\excludecomment{remarkhidden}}
{}

\definecolor{proba}{RGB}{0, 0, 128}

\newcommand\proba[1]{
\colorlet{savedleftcolor}{.}
	\mathbb P
	(
    \color{savedleftcolor}
	#1\color{proba}
	)
    \color{savedleftcolor}
	}

\usepackage{tikzsymbols}

\newcommand{\taille}[1]{|#1|}
\newcommand\sizeof\taille

\newcommand{\convergesn}{\xrightarrow[]{n \rightarrow + \infty}}

\newcommand\instancemot\initialword

\newcommand\Ksharp{\ensuremath{K{^\#}}}

\newcommand\DSF{\ensuremath{\mathsf{Dsf}}}

\newcommand{\CountingTree}{\ensuremath{\textsc{Counting{-}Tree}}\xspace}
\newcommand{\guesswithbias}{\ensuremath{\mathsf{guessGraph}}\xspace}
\newcommand{\CountingGraph}{\ensuremath{\textsc{Counting{-}Graph}}\xspace}

\newcommand{\guesstree}{\ensuremath{\mathsf{guessTree}}\xspace}

\newcommand{\explorealltrees}{\ensuremath{\mathsf{exploreAllTrees}}\xspace}
\newcommand{\sem}{\ensuremath{\mathsf{counter}}}

%% file: section_introduction.tex
Validity problems in modal logic is relevant in many reasoning tasks: verification of systems, epistemic planning.
However, validity problem is rigid in this sense: a formula is valid if it holds in \emph{all} models. In this paper, we relax the validity problem to model counting and we study the \emph{probabilistic validity problem}:
Is a formula true in a \emph{large proportion} of models?
Is a formula asymptotically true for large models?
What is the proportion of models in which a formula holds?
First, let us give motivation from a verification problem of graph neural networks (GNNs).
\newcommand\phiGNN{\phi_{\text{GNN}}}

Recently, an exact correspondence between modal logic with counting verifying GNNs has been settled and applied for verifying properties on GNNs 
(\cite{DBLP:journals/corr/abs-2307-05150}, \cite{DBLP:conf/ijcai/NunnSST24}, \cite{DBLP:conf/icalp/BenediktLMT24}).
In this paper, the only thing the reader must know is that each GNN from a large class of GNNs, namely with an eventually constant activation function or let say with truncated ReLU, is equivalent to a formula $\phiGNN$ from a logic called $\Ksharp$, which is a natural extension of graded modal logic with linear arithmetic inequations (see \cite{DBLP:journals/corr/abs-2307-05150}, \cite{DBLP:conf/ijcai/NunnSST24}, \cite{DBLP:conf/icalp/BenediktLMT24}). The notion of equivalence is: a pointed graph $G, u$ is classified positively by the GNN iff $\phiGNN$ holds in $G, u$.

An example of $\Ksharp$-formula is $\# p \geq \# q + 2$ which means that there are more $p$-successors than $q$-successors plus two. For instance, the formula $\# p \geq \# q + 2$ is true in the pointed graph 
\tikzstyle{vertex} = [draw,circle, minimum height=5mm, inner sep=0.5mm]
\begin{tikzpicture}[thick, baseline=0mm]
    \node[vertex, initial, initial text={}] (root) at (0, 1) {};
    \node[vertex] (s1) at (-3, 0) {$p$};
    \node[vertex] (s2) at (-2, 0) {$p$};
    \node[vertex] (s3) at (-1, 0) {$p$};
    \node[vertex] (s4) at (0, 0) {$q$};
    \node[vertex] (s5) at (1, 0) {};
    \node[vertex] (s6) at (2, 0) {};
    \node[vertex] (s7) at (3, 0) {};
    \foreach \x in {1, 2, 3, 4, 5, 6, 7} {
        \draw[->] (root) -- (s\x);
    }
\end{tikzpicture}
since $3 \geq 1 + 2$.

Verifying GNNs is important in order to understand how they classify data. For instance, we could verify that a GNN recommends a person whenever that person has at least 3 violinist friends, meaning that for \emph{all} pointed graphs $(G, u)$, if the current vertex has at least 3 violinist successors then the GNN classifies $(G, u)$ positively. This kind of verification tasks were addressed in \cite{DBLP:conf/ijcai/NunnSST24}, and on quantized GNNs in \cite{DBLP:conf/ijcai/SalzerST25} and \cite{DBLP:conf/kr/ChernobrovkinSS26}.

However, it is unlikely that a property holds for \emph{all} GNNs. For instance, it is unlikely that a GNN recommends a person whenever that person has at least 3 violinist friends. 
The reason is because the concept of recommended persons is a vague concept that is difficult to explain and does not completely match a property as precise as "having at least 3 violinist friends" (written $\# violinist \geq 3$). The second reason is that there are approximations due to learning process itself, making unlikely that the obtained GNN exactly corresponds to a given property.

However, it is possible that a property like "having at least 3 violinist friends" is a probable explanation of the classification of the GNN.
That is why, instead of checking that $\phi \limply \phiGNN$ is valid, we want to compute the probability that $\phi \limply \phiGNN$ holds where $\phiGNN$ is a formula that is true when the GNN recommends the current model, and $\phi$ is a property, e.g.  ``the person has at least 3 violinist friends", $\# violinist \geq 3$. We may then conclude that the probability that $\phi \limply \phiGNN$ holds in graph of size 42 is more than 0.8. 

Our paper is related to 0-1 law that first-order logic has: any FO-sentence $\phi$ is asymptotically true in large models or false in large models. While specific classes of GNNs have 0-1 law (\cite{DBLP:conf/nips/Adam-DayIC23}, \cite{NEURIPS2024_e1cd1db6}), this is clearly false for logic $\Ksharp$. Indeed, the limit of the truth of $\# p \geq \# q$ is 1/2. This justify the study of computing the probability of a formula $\phi$ given the size of the graph, or counting the number of $\phi$-models of size $n$. As many modal logics have the  tree-model property, we also focus on trees and counting the number of trees satisfying $\phi$ given a branching factor.


\begin{itemize}
    \item \emph{Counting models}. We study the complexity of counting $\phi$-trees/graphs, given formula $\phi$ from logic $\Ksharp$ (see \Cref{figure:complexityresults}).
    \item \emph{Convergence}. We show that the proportion of $\phi$-trees does not converge for some $\phi$. We propose an artificial trick to make it converge.
\end{itemize}


\begin{figure}[t]
    \hspace*{0pt}
 	\begin{center}
 \begin{tabular}{l|l|l}
  & $n$ unary & $n$ binary \\
  \hline
Trees & \multicolumn{2}{|c}{$\Sharp\PSPACE$} \\
 & \multicolumn{2}{|c}{$\PSPACE$-complete}  \\
\hline
Graphs & $\Sharp \PTIME$-complete & $\Sharp\EXPTIME$ \\
& & \PSPACE-hard 
 \end{tabular}\label{table:countingresults}
 \end{center}
\caption{Complexity results about counting trees ($n$ is the branching factor) or graphs ($n$ is then the number of vertices).\label{figure:complexityresults}}
\end{figure}

\textbf{Outline. }
In \Cref{section:relatedwork} we discuss the related work.
In \Cref{section:background}, we recall the formal definition of logic $\Ksharp$.
In \Cref{section:countingproblem}, we define the counting problems we are studying.
In \Cref{section:countingcomplexity}, we provide our complexity results. We also have an implementation using Monte-carlo method that samples arbitrary large graphs and trees and gives out the proportion of satisfiability\footnote{For anonymity reasons, we do not provide the code/repository link now.}.
In \Cref{section:convergence}, we exemplify non-convergence in trees and provide a patch for having convergence.
In \Cref{section:conclusion}, we conclude.
\newline
\noindent\fbox{
\parbox{0.97\linewidth}{
Some parts of the
 proof of~\Cref{theorem:muprimeconverge} 
 were obtained by AI, although the authors have come up with the general approach by themselves.
 }
 }



%% file: section_relatedwork.tex
The study of convergence of properties when the size of the models goes to infinity goes back to 0-1 law of first-order logic (\cite{glebskii1969range}, \cite{DBLP:journals/jsyml/Fagin76}). It means that the proportion of models satisfying a first-order property $\phi$ converges to 0 or 1 when the size $n$ of the model goes to infinity. In this work, the notion of random graph is given by the Erdős–Rényi random graph model.

Later, Granjean proved that deciding whether the limit is 1 for a given FO property is PSPACE-complete~\cite{DBLP:journals/iandc/Grandjean83}.
As modal logic is a syntactic fragment of first-order logic, it also has 0-1 law. But, the computation of the limit of a modal formula is $\Delta^p_2$-complete when the formula is represented by a DAG, and $\Delta^{p, \log n}_2$-complete if represented by a tree (see Theorem 4.6 in \cite{DBLP:journals/apal/HalpernK94}), where $\Delta^p_2 = \text{\PTIME}^\text{\NP}$ is the class of problems decided in poly-time with an NP oracle, and $\Delta^{p, \log n}_2$ is the class of problems decided in poly-time with at most $O(\log n)$ queries to some NP oracle.

First-order logic when there are function symbols does not have the 0-1 law but still the proportion of models converges when the size of the model goes to infinity \cite{lynch1985probabilities}.

In literature, there are many logical frameworks developed that deal with boolean combinations of linear inequalities. In~\cite{DBLP:journals/iandc/FaginHM90}, the authors have developed a logical framework that expresses probabilities of combination of events in real numbers. In order to do that, they use a boolean combination of linear inequalities to reason. Although they have axiomatised the framework, the underlying model they use is a usual First Order logic model of domains of real numbers.

Counting models has been a major part of modal logic literature. A related work can be~\cite{torfah2014complexitycountingmodelslineartime}, where the authors consider Linear Temporal Logics. The paper gives decision versions of counting models: whether a formula has models upto an input bound. The paper proposes procedures for counting word and tree models, each one for when the bound is unary and one for when bound is in binary.. 


In~\cite{DBLP:journals/corr/abs-2606-23715}, the authors investigate zero-one law on various coloring and edge probability (proportion) conditions on Markov Logic Networks. They define ``regimes'' that describe the global structure of the coloring and edge assignment, for example, most vertices are colored using same, etc.  They showed results on whether zero-one laws and convergence laws hold depending on such regimes.






In~\cite{DBLP:conf/nips/Adam-DayIC23}, the authors proved 0-1 law for some graph neural networks but with strong assumptions. First, they focus on graph properties while we focus on node properties. But their stronger assumption is that they restrict to classifier that are non-splitting, meaning that the property is not on a limit case (see details in their paper). For instance $\# p \geq \# q$ is a limit case and its probability converges to 1/2 while  the probability $(1+\epsilon)\# p \geq \# q$ converges to 1 when the number of successors goes to infinity, when $\epsilon > 0$. Compared to them, we consider any $\Ksharp$-formula, without non-splitting assumption. However, we also consider truncated ReLU or eventually constant activation functions; setting which corresponds $\Ksharp$.

In~\cite{NEURIPS2024_e1cd1db6}, the authors investigate convergence laws of some classes of GNNs. They prove convergence by sampling large graphs from probabilistic graph models like Erd\H{o}s--R\'{e}nyi graphs. However, the GNN architecture they consider do not use summation in message passing, but mean.

%% file: section_background.tex

We present a modal logic of counting denoted by $\Ksharp$, that was introduced in \cite{DBLP:conf/ijcai/NunnSST24}, but also in \cite{DBLP:conf/fsttcs/BednarczykOPT21} and and \cite{DBLP:conf/icalp/BenediktLMT24} with a first-order syntax. Logic $\Ksharp$ has the same expressivity than graph neural networks with truncated ReLU as an activation function (\cite{DBLP:conf/ijcai/NunnSST24}, \cite{DBLP:conf/icalp/BenediktLMT24}).

\subsection{Kripke model}
We recall that a Kripke model is a directed graph like structures with each node having some propositional valuations over a countable set of propositions $\AP$. 
\begin{definition}
    A Kripke model is a tuple $\M = \ldiaarg{W, R, V}$ where 
         $W$ is a finite countable set of worlds,
       $R\subseteq W\times W$ is the accessible relation,
      and $V: W\rightarrow 2^\AP$ is the valuation function.
A pointed graph/model $\M, s$ is a Kripke model and a vertex $s$ in $W$.
\end{definition}

\subsection{Syntax}
Fix a  countable set of propositions $\AP$. The set of $\Ksharp$-formulas $\varphi$ is defined by the grammar:
\begin{center}
$\varphi, \psi, \dots := p\mid \neg\psi\mid \varphi\vee\psi\mid \xi\geq 0$
 \hspace{1cm} 
$\xi, \xi_1, \xi_2, \dots := c\mid
 1_\varphi\mid \Sharp\varphi\mid \xi_1 + \xi_2\mid c\times\xi_1$
\end{center}
where $p\in\AP$, $c$ is an integer constant. The non-terminal $\xi$ is for arithmetic expressions.
The construction $1_\varphi$ is an indicator representing whether $\varphi$ is true in the current state. The construction $\Sharp\varphi$ is the number of $\varphi$-successors.
We write $\AP(\phi)$ for the set of propositions appearing in $\phi$.

\begin{example}\label{example:ksharpformula}
Here is a well-formed $\Ksharp$-formula: $\phi = (\Sharp p\geq 3\Sharp(\Sharp r\geq 2))\wedge r$.
\end{example}

\subsection{Truth Conditions}
The truth conditions $\M, s \models \phi$ (formula $\phi$ is true at vertex $\M, s$) is defined as follows.
\begin{definition}
    Given a $\Ksharp$-formula $\varphi$ and a Kripke model $\M = \ldiaarg{W, R, V}$, $\M,s\models\varphi$ is defined  by induction on $\varphi$ by:
    \[
    \begin{array}{ccc}
        \begin{aligned}
            \M,s\models p & \mbox{ iff } p\in V(s)\\
            \M,s\models \neg\psi & \mbox{ iff } \M,s\not\models\psi\\
            \M,s\models \varphi\vee\psi & \mbox{ iff } \M,s\models\varphi\mbox{ or }\M,s\models\psi\\
            \M,s\models\xi\geq 0 & \mbox{ iff } [[\xi]]_{\M,s}\geq 0
        \end{aligned}
        &
        &
        \begin{aligned}
            [[c]]_{\M,s} & = c\\
            [[1_\varphi]]_{\M,s} & = 1 \text{ if $\M,s\models\varphi$, 0 otherwise} \\
            [[\Sharp\varphi]]_{\M,s} & = \sum_{t: (s,t)\in R}[[1_\varphi]]_{\M,t}\\
            [[\xi_1 + \xi_2]]_{\M,s} & = [[\xi_1]]_{\M,s} + [[\xi_2]]_{\M,s}\\
            [[c\times\xi]]_{\M,s} & = c\times[[\xi]]_{\M,s}
        \end{aligned}
    \end{array}
    \]
\end{definition}
\vspace*{5pt}

\begin{example}
Consider the following Kripke model 
\tikzstyle{vertex} = [draw,circle, minimum height=5mm, inner sep=0.5mm]
\begin{tikzpicture}[baseline=0mm, xscale=0.8, thick]
    \node[vertex, initial, initial text={}, label=right:{$r$}] (root) at (0, 1) {$root$};
    \node[vertex, label=below:{$p,r$}] (s3) at (-2, 0) {$s$};
    \node[vertex, label=below:{$p$}] (s5) at (2, 0) {$t$};
    \node[vertex, label=below:{$p, r$}] (s4) at (0, 0) {$u$};
    \node[vertex, label=below:{$r$}] (s6) at (4, 0) {$v$};
    \draw[->] (root) -- (s3);
    \draw[->] (root) -- (s5);
    \draw[->] (root) -- (s4);
    \draw[->] (s5) -- (s4);
    \draw[->] (s3) -- (s4);
    \draw[->] (s5) -- (s6);
\end{tikzpicture}
and the formula $\phi$ in~\Cref{example:ksharpformula} evaluated from the top node pointed with an arrow.
Formula $\phi$ holds true in this model from the root node. The root node has $r$ true in it. Now consider the $\Sharp p\geq 3\Sharp(\Sharp r\geq 2)$ part of the formula. Note that, $\Sharp p = 3$ (s,t and u). Also, consider the successor node where only $p$ is true (the node t). $t$ has two successors where $r$ holds (nodes $u$ and $v$). Hence $\Sharp r\geq 2$ holds in $t$. Note that $t$ is the only successor of root that holds $\Sharp \geq 2$. Hence $\Sharp(\Sharp r\geq 2) = 1$ at the root. Hence $\Sharp p\geq 3\Sharp(\Sharp r\geq 2)$ also holds true at root.
\end{example}

\begin{definition}[Modal Depth]
   The modal depth $md(\phi)$ of formula $\phi$ is defined 
    as:
    \begin{itemize}
        \item $md(p) = 0$, $md(\neg\psi) = md(\psi)$, $md(\varphi\vee\psi) = max(md(\phi), md(\psi))$, $md(\xi\geq 0) = md(\xi)$
        \item $md(c) = 0$, $md(1_\varphi) = md(\phi)$, $md(\Sharp\varphi) = md(\phi) + 1$, $md(\xi_1 + \xi_2) = max(md(\xi_1), md(\xi_2))$, $md(c\times\xi_1) = md(\xi_1)$
         
    \end{itemize}
\end{definition}

Intuitively, modal depth is the length of the longest nesting of the modality operator~$\Sharp$.

\begin{example}
We have
$
md((\Sharp(\Sharp p\geq\Sharp q)\geq 0)\wedge (\Sharp p\geq\Sharp q)) = 2
$. 
This is because $(\Sharp(\Sharp p\geq\Sharp q)\geq 0)$ has the maximum modal depth, which is $2$.
\end{example}

A subformula of a $\Ksharp$-formula is \emph{direct} when it does lie under the scope of exactly one $\Sharp$-modality. The concept of direct formula is important because we count the number of successors satisfying these direct subformulas to evaluate a formula in the current node. We write $\DSF(\phi)$ for the set of direct subformulas.

\begin{example}
\label{example:DSF}
In the formula $\phi = (\Sharp(\Sharp p\geq\Sharp q)\geq 0)\wedge (\Sharp r\geq\Sharp s)$, subformulas $\Sharp p\geq\Sharp q$,~$r$ and $s$ are direct while $p$ and $q$ are not. We have $\DSF(\phi) = \{\Sharp p\geq\Sharp q, r, s\}$.
\end{example}

Formally $\DSF(\phi)$ is defined as follows.

\begin{definition}[Direct Sub formulas]
    Given a \Ksharp formula $\varphi$, the set of \emph{direct subformulas}~$\DSF(\varphi)$ is defined by induction on $\varphi$ as:
    \newcommand{\DSFonexpr}{S'}
    
    \begin{minipage}{0.48\textwidth}
        \begin{itemize}
            \item $\DSF(p) = \emptyset$
            \item  $\DSF(\neg\psi) = \DSF(\psi)$
            \item $\DSF( \psi_1\vee\psi_2) = \DSF(\psi_1)\union \DSF(\psi_2)$
       	\item $\DSF( \xi\geq 0) = \DSFonexpr(\xi)$
        \end{itemize}
    \end{minipage}
    \hfill
    \begin{minipage}{0.48\textwidth}
        \begin{itemize}
            \item $\DSFonexpr(c) = \emptyset$
            \item $\DSFonexpr(1_{\varphi}) = \DSF(\varphi)$
            \item $\DSFonexpr(\Sharp\varphi) = \{\varphi\}$
            \item $\DSFonexpr(\xi_1 + \xi_2) = \DSFonexpr(\xi_1)\union \DSFonexpr(\xi_2)$
            \item $\DSFonexpr(c\times\xi) = \DSFonexpr(\xi)$
        \end{itemize}
    \end{minipage}
\end{definition}

\subsection{Link with other logics}

Note that standard modal logic can be expressed in $\Ksharp$: $\ldiamond \phi$ is a shorthand for $\# \phi \geq 1$ (there exists a $\phi$-successor), while $\lbox \phi$ is a shorthand for $\lnot \ldiamond \lnot \phi$. Graded modal logic can also be expressed in $\Ksharp$: $\ldiamond^{\geq k} \phi$ is a shorthand for $\# \phi \geq k$ (there exists at least $k$ $\phi$-successor).

\section{Counting models}
\label{section:countingproblem}

\newcommand\setofgraphssatisfying[2]{\mathcal G_{#1}^{#2}}
\newcommand\setoftreesatisfying[2]{\mathcal T_{#1}^{#2}}
\newcommand\modaldepthof[1]{md(#1)}

In this section, we define our counting problems. To do that, we suppose that $\AP$ is finite (otherwise there is already an infinite number of 1-vertex models!).

\subsection{Counting graphs}

We write $\mathcal G_{n}$ for the set of pointed graphs whose domain of size $n$. More precisely, we fix the domain to be $\set{s_1, \dots, s_n}$ and the pointed state is $s_1$, i.e. $$\mathcal G_{n} := \set{(\M, s_1), \suchthat \M = (W, R, V) \text{ with } W = \set{s_1, \dots, s_n}}.$$

We define 
$\setofgraphssatisfying{n}{\phi}$ to be the subset pointed graphs in $\mathcal G_{n}$ satisfying formula $\phi$. When not explicitly said otherwise, we suppose then that $\AP = \AP(\phi)$ i.e. that the set of propositions contains exactly the propositions appearing in $\phi$.

\begin{example}
The number of pointed graphs of size 2 on the proposition set $\set{p}$ is $2 \times 2 \times 4 \times 4$ (self-loop on 1, self-loop on 2, type of edges between 1 and 2, valuations). There are 4 pointed graphs of size 2 satisfying $p \land \ldiamond \lnot p \land \lbox \lnot p $:

   \begin{tikzpicture}[
           >=latex, 
           node distance=1cm, 
           thick,
           world/.style={circle, draw, minimum size=5mm, inner sep=0pt}
       ]
           \node[world] (w1) {$1$};
           \node[world, right of=w1] (w2) {$2$};
   
           \node[above=0cm of w1] {$p$};
           \node[above=0.15cm of w2] {};
   
           \draw[->] ([xshift=-5mm]w1.west) -- (w1.west);
   
           \draw[->] (w1) to node[above] {} (w2);
           
   
       \end{tikzpicture}
              \hfill
   \begin{tikzpicture}[
           >=latex, 
           node distance=1cm, 
           thick,
           world/.style={circle, draw, minimum size=5mm, inner sep=0pt}
       ]
           \node[world] (w1) {$1$};
           \node[world, right of=w1] (w2) {$2$};
   
           \node[above=0cm of w1] {$p$};
           \node[above=0.15cm of w2] {};
   
           \draw[->] ([xshift=-5mm]w1.west) -- (w1.west);
   
           \draw[->] (w1) to node[above] {} (w2);
           
            \draw[->] (w2) to [loop above] node {} (w2);
   
       \end{tikzpicture}
              \hfill
   \begin{tikzpicture}[
           >=latex, 
           node distance=1cm, 
           thick,
           world/.style={circle, draw, minimum size=5mm, inner sep=0pt}
       ]
           \node[world] (w1) {$1$};
           \node[world, right of=w1] (w2) {$2$};
   
           \node[above=0cm of w1] {$p$};
           \node[above=0.15cm of w2] {};
   
           \draw[->] ([xshift=-5mm]w1.west) -- (w1.west);
   
           \draw[->] (w1) to node[above] {} (w2);
            \draw[->] (w2) to node[above] {} (w1);
           
            \draw[->] (w2) to [loop above] node {} (w2);
   
       \end{tikzpicture}
       \hfill
     \begin{tikzpicture}[
             >=latex, 
             node distance=1cm, 
             thick,
             world/.style={circle, draw, minimum size=5mm, inner sep=0pt}
         ]
             \node[world] (w1) {$1$};
             \node[world, right of=w1] (w2) {$2$};
     
             \node[above=0cm of w1] {$p$};
             \node[above=0.15cm of w2] {};
     
             \draw[->] ([xshift=-5mm]w1.west) -- (w1.west);
     
             \draw[->] (w1) to node[above] {} (w2);
             \draw[->] (w2) to node[above] {} (w1);
     
         \end{tikzpicture}

         Said differently, when we consider $\AP = \set{p}$, $|\setofgraphssatisfying{n}{p \land \ldiamond \lnot p \land \lbox \lnot p}| = 4$.
\end{example}

 Then the proportion of pointed graphs of size $n$ satisfying $\phi$ is defined by:
$$
\mu_n (\phi) := \frac{|\setofgraphssatisfying{n}{\phi}|}{|\setofgraphssatisfying{n}{}|}.
$$
Note that the definition of $\mu_n(\phi)$ does not depend on $\AP$, as long as $\AP$ contains the propositions appearing in $\phi$. As presented in \cite{DBLP:journals/jsyml/Fagin76}, a logical framework $\mathcal{L}$ is defined to have zero-one law if for any formula $\phi\in\mathcal{L}$, 
        $\lim_{n\rightarrow\infty}\mu_n(\phi)\in\{0,1\}$. As discussed in the introduction, logic $\Ksharp$ does not have zero-one law.

Note that $|\setofgraphssatisfying{n}{}| = 2^{n^2 + |\AP| \times n}$ (choosing the set of edges and the valuations at each vertex). Thus,  computing $\mu_n(\phi)$ reduces to compute $|\setofgraphssatisfying{n}{\phi}|$.

\subsection{Counting trees}

Many modal logics have the tree-like model property: if a formula is satisfiable, then it is satisfiable in a pointed tree. That is why it is natural to count the number of trees satisfying a property wrt to the branching factor (i.e. outgoing arity).
We define $\mathcal T_{n, d}$ to be the set of pointed trees of depth $d$ and branching factor $n$. Similarly than for graphs, we define $\mathcal T_{n, d}^\phi$ to be the subset of pointed trees in $\mathcal T_{n, d}$ satisfying $\phi$.

\begin{example}
\newcommand\treeofdepthone[3]{
   \begin{tikzpicture}[
           >=latex, 
           node distance=1cm, 
           thick,
           world/.style={circle, draw, minimum size=5mm, inner sep=0pt}
       ]
           \node[world] (w1) {$1$};
           \node[world] (w2) at (-1, -1) {$2$};
   \node[world] (w3) at (1, -1) {$3$};
           \node[above=0cm of w1] {$#1$};
           \node[above=0mm of w2] {$#2$};
           \node[above=0mm of w3] {$#3$};
   
           \draw[->] ([xshift=-5mm]w1.west) -- (w1.west);
   
           \draw[->] (w1) to node[above] {} (w2);
           \draw[->] (w1) to node[above] {} (w3);
   
       \end{tikzpicture}}
       Here are the 4 pointed trees of depth 1 with branching factor 2 on the proposition set $\set{p}$ satisfying $\ldiamond p \land \lnot \lbox p$:
       
      \treeofdepthone{p}{p}{}
      \hfill
      \treeofdepthone{p}{p}{}
      \hfill
      \treeofdepthone{}{}{p}
      \hfill
      \treeofdepthone{}{}{p}
      
       Said differently, when we consider $\AP = \set{p}$, $|\setoftreesatisfying{n, 1}{\ldiamond p \land \lnot \lbox p}| = 4$.
\end{example}

\newcommand{\proportionoftrees}[2]{\mu^{\mathcal T}_{#1}(#2)}

We define the proportion of pointed trees of branching factor $n$ satisfying $\phi$ as follows:

$$\proportionoftrees n \phi :=
\frac{|\mathcal T_{n, \modaldepthof\phi}^{\phi}|}{|\mathcal T_{n, \modaldepthof\phi}^{}|}.
 $$

In the definition of $\proportionoftrees n \phi$, we consider trees of depth
 $\modaldepthof\phi	$. 
 Note that every vertices after depth $\modaldepthof\phi$ is not relevant when evaluating $\phi$.
 Again, for computing $\proportionoftrees n \phi$, it is sufficient to compute $\setoftreesatisfying{n, md(\phi)}{\phi}$ since $|\mathcal T_{n, \modaldepthof\phi}^{}| = 2^{\AP \times (2^{\modaldepthof{\phi}+1} - 1)}$ (choosing the valuation for each of the $2^{\modaldepthof{\phi}+1} - 1$ vertices).

%% file: section_counting.tex

In this section, we investigate the problem of computing $\proportionoftrees n \phi$  and $\mu_n(\phi)$, that is computing $|\setoftreesatisfying{n, md(\phi)}{\phi}|$ and $|\setofgraphssatisfying{n}{\phi}|$.

\subsection{Complexity of Counting Tree Models}
\newcommand{\brf}n 
We investigate on calculating the number of trees $|\setoftreesatisfying{n, md(\phi)}{\phi}|$ satisfying some $\Ksharp$-formula $\phi$.

\begin{definition}
The \CountingTree problem is defined as:
\begin{itemize}
    \item input: a $K^\#$-formula $\phi$, a number $n$ (branching factor)
    \item output: $|\setoftreesatisfying{n, md(\phi)}{\phi}|$, the number of pointed tree models $(\M, s)$, with branching factor $n$ and depth $md(\phi)$ such that $\M,s\models\phi$.
\end{itemize}
\end{definition}

In this subsection, we prove the following theorem.

\begin{restatable}{theorem}{CountTreeSharpPspace}
	\label{theorem:counttreessharpPSPACE}
    \CountingTree is in \Sharp \PSPACE, if $n$ is in binary (and also in unary).
\end{restatable}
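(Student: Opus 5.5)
We will use the standard characterisation of $\Sharp\PSPACE$: a function $f$ is in $\Sharp\PSPACE$ iff there is a nondeterministic Turing machine working in polynomial space such that, on every input $x$, the number of accepting computations equals $f(x)$. Equivalently, we may give a polynomial-space procedure that enumerates candidate objects and accepts exactly once per object satisfying the property. The plan is to design a nondeterministic procedure $\guesstree$. It guesses a pointed tree of branching factor $n$ and depth $md(\phi)$ on the fly, in depth-first order. On each branch it uses polynomial space to check that the root satisfies $\phi$. Each tree in $\setoftreesatisfying{n, md(\phi)}{\phi}$ must correspond to exactly one accepting run. The tree itself has $\sum_{i\le md(\phi)} n^i$ vertices, which is exponential even for unary $n$, so it can never be stored. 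The key point is that we never keep more than one root-to-leaf path of the tree in memory.

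\textbf{Evaluation during the guess.} Concretely, $\guesstree(\psi\text{-set}, k)$ works at a vertex at depth $k$. It first guesses the valuation of the vertex, i.e.\ a subset of $\AP(\phi)$. If $k < md(\phi)$, it then processes the $n$ children one after another. For each child it recursively guesses the child's subtree and gets back the truth values, at that child, of all subformulas of $\phi$ of the appropriate modal depth. The simplest choice is to return the truth values of all subformulas of $\phi$. For each $\chi$ with $\Sharp\chi$ a subformula, it maintains a counter $\sem(\chi)$, incremented whenever the child satisfies $\chi$. Each counter is bounded by $n$, so it takes $O(\log n)$ bits, polynomial in the binary encoding of $n$. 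The loop index over the $n$ children is likewise an $O(\log n)$-bit counter. After the children are processed, the procedure evaluates all subformulas at the current vertex bottom-up, using the counters for $\Sharp\chi$ and the arithmetic semantics. Constants and coefficients are given in binary, so all values have polynomially many bits. At depth $md(\phi)$ there are no children, and every $\Sharp\chi$ evaluates to $0$. At the root, the machine accepts iff $\phi$ is true. The recursion depth is $md(\phi)\le|\phi|$, and each frame stores polynomially many bits: the valuation, the $|\phi|$ counters, the child index and the subformula truth vector. Hence the total space is polynomial in $|\phi|+\log n$. Since binary $n$ gives the weaker hypothesis, the unary case follows.

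\textbf{Bijection between trees and runs.} The main point to verify is that accepting runs are in bijection with satisfying trees. The only nondeterministic choices are the valuations, made once per vertex, in a fixed order of vertices (children indexed $1..n$, matching the fact that $\mathcal T_{n,d}$ consists of labelled trees on fixed positions). A run is therefore exactly a valuation assignment to all positions, i.e.\ a tree, and its evaluation is deterministic. We must ensure that no other nondeterminism occurs, such as branching over guessing subformula values. If a cleaner certificate-based formalism is preferred, we instead appeal to the equivalent definition of $\Sharp\PSPACE$: count the number of exponentially long certificates $y$ such that a polynomial-space machine with one-way read access to $y$ accepts. The certificate is the depth-first listing of valuations, and the same evaluation procedure reads it left to right. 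The remaining step needing care is correctness of the evaluation, by induction on depth: the truth vector returned for a vertex at depth $k$ is correct for all subformulas of modal depth $\le md(\phi)-k$. This is exactly what the root needs.
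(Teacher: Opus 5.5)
Your proposal is correct and is essentially the paper's proof. Both guess the tree nondeterministically and depth-first, with recursion depth $md(\phi)$ and $O(\log n)$-bit counters per subformula, so that each satisfying tree yields exactly one accepting run. The only difference is how a child's subformula truth values are obtained. The paper's $\mathsf{exploreAllTrees}$ also guesses each child's $\DSF(\phi)$-valuation $V$ and verifies it by recursing on $descr(V)$, so its bijection relies on exactly one $V$ surviving per child. You compute and return the children's truth values deterministically, which makes the correspondence between runs and trees immediate.
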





\newcommand\modelsshallow{\models_L}
Consider a formula $\phi$. We now define the notion of \emph{local consistency} of $\phi$ at a given vertex in a tree. The only information we need is the propositional valuation $\nu$ and the number of successors in which each subformula in $\DSF(\varphi)$ is true. That last information is modeled by a function $\sem: \DSF(\phi) \rightarrow \ensN$ that assigns to each subformula $\psi \in \DSF(\varphi)$ by the number of successors in which $\psi$ hold. We now define formally local-consistency of $\phi$, denoted by $(\nu, \sem) \modelsshallow \phi$.

    \begin{definition}[Local Consistency $\models_L$]
        Consider a formula $\phi$, a propositional valuation $\nu$ over the propositions of $\phi$, and a function $\sem: \DSF(\phi)\rightarrow\mathbb{N}$. Formula $\phi$ is said to be \emph{local-consistent} with $(\nu, \sem)$, denoted by $(\nu,\sem)\models_L\phi$, if $\phi$ holds true after applying the following steps in order:
        \begin{enumerate}
            \item For each $\psi\in\DSF(\phi)$, replace each non-$\#$-nested occurrence $\Sharp\psi$ in $\phi$ by $\sem(\psi)$.
            \item Replace each proposition $p$ left by $\nu(p)$.
        \end{enumerate}
    \end{definition}


\newcommand{\exampleval}[2]{
\begin{array}{ll}
p & \mapsto #1 \\
\#r \geq 2 & \mapsto #2
\end{array}
}

\begin{example}
    Consider $\phi := q \land (\# p \geq 3 \# (\#r \geq 2))$. If $\nu[q] = 1$, $\sem(p) = 3$ and $\sem(r\geq 2) = 1$,  
    then $\phi$ is local consistent with $(\nu, \sem)$ since $q$ is true (since $v[q] = 1$), $\# p$ is replaced by 3, and $\# (\#r \geq 2)$ is replaced by 1, and $3 \geq 3\times 1$.
\end{example}

    A $\DSF(\varphi)$-valuation $V$ is a function $V: \DSF(\varphi) \rightarrow \set{0, 1}$. 
Furthermore, given a $\DSF(\varphi)$-valuation $V$, we define 
    $$
    descr(V) = \lbigand_{\psi\mid V[\psi]= 1} \psi \land \lbigand_{\psi \mid V[\psi]= 0} \lnot \psi
    $$.

For instance, $V(p) = 1$, $V(\# r \geq 2) = 0$, then $descr(V) = p \land \lnot( \# r \geq 2)$.

        



    We prove \Cref{theorem:counttreessharpPSPACE} using~\Cref{algo:guesstree} (\guesstree). We prove it by the following facts. 
    \begin{fact}
        The number of accepting executions of $\guesstree(\phi, n)$ is $|\setoftreesatisfying{n, md(\phi)}{\phi}|$.
    \end{fact}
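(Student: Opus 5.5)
The plan is to exhibit a bijection between accepting executions of $\guesstree(\phi, n)$ and pointed trees in $\setoftreesatisfying{n, md(\phi)}{\phi}$. I assume $\guesstree$ has the following shape. At the current vertex it nondeterministically guesses a propositional valuation $\nu$. It then processes the $n$ children one after the other. For each child it recursively guesses a subtree of depth one less, and that recursive call returns the $\DSF$-valuation $V$ of the child, i.e.\ which $\psi \in \DSF(\cdot)$ hold there. The caller increments $\sem(\psi)$ for every $\psi$ with $V[\psi] = 1$ and discards the child's subtree, keeping only counters, which is what gives polynomial space. At the end it checks $(\nu,\sem) \modelsshallow \phi$ and accepts or rejects. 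The crucial feature, which I will verify against the pseudo-code, is that the only nondeterministic choices are the valuations of the vertices. Truth values of direct subformulas are computed deterministically from the guessed subtree, not guessed.

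I will prove a stronger statement by induction on the depth $d$. For every formula $\chi$ with $md(\chi) \le d$ handled by the recursive procedure at depth $d$, the following hold.
\begin{enumerate}
\item The map sending a (non-rejecting) execution of the subroutine to the tree of depth $d$ whose valuations are the guessed ones is a bijection onto $\mathcal T_{n,d}$. The guesses occur in a fixed order: root first, then the children's subtrees in order $1,\dots,n$, each in the same recursive order.
\item The value returned equals the set of formulas of $\DSF$ (resp.\ subformulas queried) true at the root of that tree.
\end{enumerate}

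For the base case $d=0$, the only choice is $\nu$, the counters are empty, and local consistency reduces to propositional evaluation. For the inductive step, a sequence of guesses splits uniquely into a root valuation followed by $n$ blocks, one per child. By induction each block corresponds bijectively to a subtree of depth $d-1$ and yields the correct $\DSF$-valuation of the child. Hence after the loop $\sem(\psi)$ equals the number of children satisfying $\psi$.

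It then remains to show that $(\nu,\sem)\modelsshallow\phi$ iff $\M,s\models\phi$. I will argue this by a routine structural induction on $\phi$, comparing the truth conditions with the two replacement steps of the local-consistency definition. Non-nested occurrences of $\Sharp\psi$ evaluate to $\sum_{t} [[1_\psi]]_{\M,t} = \sem(\psi)$, and propositions at depth $0$ evaluate to $\nu$. The $1_\varphi$ case needs care, since $\DSF$ descends into $1_\varphi$ without adding a modality; the replacement handles it uniformly.

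Restricting the bijection to executions passing the final check then gives exactly $\setoftreesatisfying{n, md(\phi)}{\phi}$, since trees of depth $md(\phi)$ are the counted objects. The main obstacle I expect is making sure no spurious nondeterminism exists. If the algorithm instead guesses the truth values $V$ of the children and verifies them recursively, I must show that a guess is accepted only when it equals the true valuation, so each tree still yields exactly one accepting run. I would handle this by having the recursive call accept exactly when the guessed $descr(V)$ is locally consistent at the child, and noting that exactly one $V$ makes $descr(V)$ true.
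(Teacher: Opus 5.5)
Your argument goes through, but only via your last paragraph. The paper's $\mathsf{exploreAllTrees}$ is exactly the guess-and-verify variant: for each child it guesses a $\DSF(\phi)$-valuation $V$, updates $\mathsf{counter}$ from $V$, and calls itself on $descr(V)$ without returning anything. So the premise of your main line fails against the pseudo-code, since there the children's valuations are guessed rather than computed and returned by the recursive call. Your fallback is then exactly the paper's proof. Surjectivity comes from choosing $V_i$ to be the true $\DSF(\phi)$-valuation of child $i$ and applying the induction hypothesis to $descr(V_i)$. Injectivity holds because that $V_i$ is the unique one with $descr(V_i)$ true at the child.

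When you write it out, two statements need to change.
\begin{itemize}
\item The invariant must become ``non-rejecting executions of $\mathsf{exploreAllTrees}$ on $\chi$ are in bijection with the trees of depth $md(\chi)$ satisfying $\chi$'', not a bijection onto all of $\mathcal T_{n,d}$. Note that $md(descr(V)) = md(\phi)-1$ for every $V$, so all children get the right depth.
\item ``The recursive call accepts exactly when $descr(V)$ is locally consistent at the child'' should read ``exactly when the guessed subtree satisfies $descr(V)$''. Local consistency at the child alone does not certify the grandchildren's guessed valuations; the induction hypothesis does.
\end{itemize}

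Your explicit lemma is worth keeping: $(\nu,\mathsf{counter})\models_L\phi$ coincides with truth at the root when the counters are the true ones. The paper uses this step but leaves it implicit.
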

    \begin{proof}
    \newcommand{\someacceptingexec}{\pi}
        Given a $\Ksharp$ formula $\phi$, the set $\AP$ of propositions appearing in $\phi$, and the branching factor $n$, we show that there is a one-to-one correspondence between non-rejecting executions of $\explorealltrees(\phi, \AP, n)$
       and trees in $\setoftreesatisfying{n, md(\phi)}{\phi}$. This sufficient because an accepting execution of $\guesstree(\phi, n)$ corresponds to a non-rejecting execution of $\explorealltrees(\phi, \AP, n)$.

       \newcommand{\setnonrejectingexecof}[1]{NonRejExec_{#1}}
         For that we first formally define, by induction on the modal depth of $\phi$, a function $f_\phi : \setnonrejectingexecof{\explorealltrees(\phi, \AP, n)}\rightarrow \setoftreesatisfying{n, md(\phi)}{\phi}$ where
        $\setnonrejectingexecof{\explorealltrees(\phi, \AP, n)}$ is the set of non-rejecting paths of $\explorealltrees(\phi, \AP, n)$.

      The base case is when $\phi$ is Boolean. By written $\someacceptingexec_\nu$ the execution that guessed valuation $\nu$ at Line 2, we set $f_\phi(\someacceptingexec_\nu)$ to the single pointed vertex tagged with the valuation $\nu$. In picture:
     
            $$
                f_{\phi}(P_\nu) = \begin{tikzpicture}[thick,baseline= 0.9cm, vertex/.style={draw,circle, minimum height=5mm, inner sep=0.5mm}]
                            \node[vertex, initial, initial text={}] (root) at (0, 1) {$\nu$};
                        \end{tikzpicture}
            $$

Induction case for a formula $\phi$. In that case, the execution starts by guessing a valuation $\nu$, guess a valuation $V_1$ over $\DSF(\phi)$, perform an execution $\someacceptingexec_1$ at line 11 with $descr(V_1)$, guess a valuation $V_2$ over $\DSF(\phi)$, perform an execution $\someacceptingexec_2$ at line 11 with $descr(V_2)$, etc. Said, differently, the execution is parameterized by $(\nu, V_1, \someacceptingexec_1, \dots, V_n, \someacceptingexec_n)$ and we write $\someacceptingexec_{(\nu, V_1, \someacceptingexec_1, \dots, V_n, \someacceptingexec_n)}$. The tree $f_{\phi}(\someacceptingexec_{(\nu, V_1, \someacceptingexec_1, \dots, V_n, \someacceptingexec_n)})$ is then a root tagged with valuation $\nu$ with children corresponding to the sub-executions $\someacceptingexec_1, \dots, \someacceptingexec_n$. The child for $\someacceptingexec_i$ is obtained via $f_{descr(V_i)}(\someacceptingexec_i)$.

            $$f_{\phi}(\someacceptingexec_{(\nu, V_1, \someacceptingexec_1, \dots, V_n, \someacceptingexec_n)}) = \begin{tikzpicture}[thick,baseline=0.5cm, vertex/.style={draw,circle, minimum height=5mm, inner sep=0.5mm}, 
            vertext/.style={
            }]
                            \node[vertex, initial, initial text={}] (root) at (0, 1) {$\nu$};
                            \node[vertext] (s1) at (-4, 0) {$f_{descr(V_1)}(\someacceptingexec_1)$};
                            \node[vertext] (s2) at (-1, 0) {$f_{descr(V_2)}(\someacceptingexec_2)$};
                            \node[] (s3) at (2, 0) {\ldots};
                            \node[vertext] (s3) at (4, 0) {$f_{descr(V_n)}(\someacceptingexec_n)$};
                            \foreach \x in {1, 2, 3} {
                                \draw[->] (root) -- (s\x);
                            }
                        \end{tikzpicture}$$

           Each accepting path corresponds to a satisfying model. In other words, we prove that function $f$ is a bijection. The proof goes by induction on the modal depth of $\phi$. The base case, when $\phi$ is Boolean, is straightforward. Now for inductive step, we proceed as follows.

        \noindent\textbf{Surjection.  } Consider a pointed tree model $\tau\in\setoftreesatisfying{n, md(\phi)}{\phi}$. 
        We construct an execution $\someacceptingexec$ as follows. First, we guess the valuation $\nu$ over $\AP(\phi)$ as in the root of $\tau$ at Line 6. Now consider child $i$ among the $n$ many children of the root.
         Note that $\DSF(\phi)$ are exactly the sub-formulas of $\phi$ that have exactly one less $\Sharp$-modal depth. Hence, in our execution, we guess the valuation over $\DSF(\phi)$ making exactly the formulas in $\DSF(\phi)$ true in the $i$-th child (Line 9 of the $i$-th iteration of for loop in Line 8). Hence at that $i$-child, $desc(V)$ holds. By induction hypothesis, there exists a non-rejecting execution $\someacceptingexec_i$ for the $i$-th call at Line 11. The obtained execution $\someacceptingexec$ is $\someacceptingexec_{(\nu, V_1, \someacceptingexec_1, \dots, V_n, \someacceptingexec_n)}$.
        
        \noindent\textbf{Injection.  } Take two non-rejecting executions $P_1\neq P_2$. Hence they differ on at least one of the nondeterministic choice. Just like the surjection proof, since the non-deterministic choices correspond to valuation of either propositions in the root or child nodes, they will correspond to different trees.

    \end{proof}
    \begin{fact}
        \explorealltrees runs in poly-space.
    \end{fact}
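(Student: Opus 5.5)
The plan is to bound the space of one recursive call of \explorealltrees{} and multiply by the recursion depth. The recursion depth is bounded by $md(\phi)+1$. Each recursive call at Line 11 is made on $descr(V_i)$. This is a Boolean combination of formulas in $\DSF(\phi)$, each of which has modal depth at most $md(\phi)-1$, so $md(descr(V_i)) \leq md(\phi)-1$. Moreover, $|descr(V_i)| = O(|\phi|)$, since $\DSF(\phi)$ consists of subformulas of $\phi$ and contains at most $|\phi|$ formulas. The sizes therefore do not grow along the recursion. Strictly speaking, I would pass $V_i$ rather than the string $descr(V_i)$, and represent formulas as pointers into $\phi$, to avoid any blow-up.

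Next I list what a single frame stores:
\begin{itemize}
\item the guessed valuation $\nu$ over $\AP(\phi)$, which takes $O(|\phi|)$ bits;
\item the loop counter $i \leq n$, which takes $O(\log n)$ bits (this is where $n$ in binary matters);
\item the current guessed $V_i$, which takes $O(|\phi|)$ bits;
\item the counter function $\sem : \DSF(\phi) \to \ensN$, which records for each direct subformula how many children chosen so far make it true, with values at most $n$, so $O(|\phi| \log n)$ bits in total.
\end{itemize}
Crucially, the children are not stored. After the recursive call for child $i$ returns, only $\sem$ is updated from $V_i$ and the frame moves on, so the frames of different siblings reuse the same space. The local consistency check $(\nu, \sem) \modelsshallow \phi$ is done after the loop. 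It amounts to substituting numbers and evaluating linear arithmetic with integer constants from $\phi$. The intermediate values have bit-size polynomial in $|\phi|$ and $\log n$, so this check runs in polynomial time and hence in polynomial space.

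Summing up, the total space is $O(md(\phi)\cdot |\phi| \log n)$, which is polynomial in the input. The main point needing care is exactly the non-storage of subtrees: the tree itself has exponentially many vertices, about $n^{md(\phi)}$. Correctness of discarding the subtrees rests on the fact that only the $\DSF(\phi)$-profile of each child, namely $V_i$ together with the consistency enforced recursively, matters for evaluating $\phi$ at the root. This is the content of the previous Fact. With $n$ in unary the bound only improves.
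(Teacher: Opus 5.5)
Your proof is correct and follows the same approach as the paper: bound the recursion depth by the modal depth of $\phi$, note that each frame stores only $\nu$, the current $V$, and counters $\sem[\psi] \leq n$ of $O(\log n)$ bits each, and observe that the space of a finished recursive call is reused. You also make explicit some points the paper leaves implicit: the loop counter, the fact that $descr(V_i)$ stays polynomially bounded because it is built from subformulas of $\phi$, and the polynomial cost of the local consistency check.
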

    \begin{proof}
        The number of nested calls is bounded by the $\#$-modal depth of the initial formula~$\phi$. Remember that the loop at Line 7 takes an exponential time in the size of $n$ (since $n$ is in binary), but the memory usage of each recursive call is cleaned after having been performed. The memory used at each call is polynomial in the size of the input: storing the valuation $\nu$ and $V$ is in $O(|\phi|)$ where $\phi$ is the initial formula. Also, for all $\psi \in \DSF(\phi)$, $\sem[\psi] \leq n$. So $\sem[\psi]$ can be represented with $O(\log n)$ bits which is of polynomial size to the input.
    \end{proof}

        


\begin{algorithm}
    \caption{\explorealltrees}
    \label{algo:explorealltrees}
    \KwIn{\Ksharp-formula $\phi$, $\AP = \AP(\phi)$ propositions of $\phi$, branching factor $n$ in binary} 
    \KwOut{\algoreject or hand over}
    \If{$\phi$ is Boolean}{
        Guess a valuation $\nu$ over $\AP$\;
        \algoif $\nu\not\models\phi$ \algothen \algoreject\;
    }
    \Else{
        Guess a valuation $\nu$ over $\AP$\;
        $\sem \gets [0]_{\DSF(\phi)}$\;
        \For{$i = 1\ldots n$}{
            Guess a valuation $V$ over $\DSF(\phi)$\;
            $\sem[\psi]\gets \sem[\psi] + 1$ for each $\psi\in\DSF(\phi)$ such that $V(\psi) = 1$\; 
            \explorealltrees($descr(V), \AP, n$)\;
        }
        \algoif $(\nu, \sem)\not\models_L\phi$ \algothen            \algoreject\;
    }
\end{algorithm}

\begin{algorithm}
    \caption{\guesstree}
    \label{algo:guesstree}
    \KwIn{\Ksharp-formula $\phi$, integer $n$ in binary} 
    \KwOut{\algoaccept if \explorealltrees hands over; \algoreject otherwise}
    \explorealltrees($\phi, n$)\;
    \algoaccept\;
\end{algorithm}


\begin{theorem}
Given $\phi$, deciding that $|\setoftreesatisfying{2}{\phi}| \geq 1$ is PSPACE-hard.
\end{theorem}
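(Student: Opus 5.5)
We reduce from QBF (TQBF), the standard PSPACE-complete problem, exactly as for the classical PSPACE-hardness of satisfiability in modal logic $K$ (Ladner's construction). Here the notation $\setoftreesatisfying{2}{\phi}$ means trees of branching factor $2$ and depth $md(\phi)$. Let $\Phi = Q_1 x_1 \dots Q_m x_m\, \beta(x_1,\dots,x_m)$ be a closed QBF. We build in polynomial time a $\Ksharp$-formula $\phi_\Phi$ of modal depth $m$, using propositions $x_1,\dots,x_m$. The goal is that $\phi_\Phi$ has a model among the full binary trees of depth $m$ iff $\Phi$ is true. Such trees are rigid: every non-leaf vertex has exactly two children. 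We will use this rigidity to encode both branches of a universal quantifier.

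The intended model assigns to each level-$i$ vertex the values of $x_1,\dots,x_i$. The encoding has three parts.

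\begin{itemize}
\item \textbf{Universal level.} If $Q_{i+1} = \forall$, a vertex at depth $i$ must have one child with $x_{i+1}$ and one with $\lnot x_{i+1}$. We express this as $\Sharp x_{i+1} \geq 1 \land \Sharp \lnot x_{i+1} \geq 1$.
\item \textbf{Existential level.} If $Q_{i+1}=\exists$, we impose nothing beyond what follows: both children are just two candidate witnesses. It is harmless to require $\Sharp x_{i+1} \geq 2 \lor \Sharp \lnot x_{i+1} \geq 2$, so that both children agree.
\item \textbf{Persistence.} Values chosen earlier must be propagated to the children. For each $j \leq i$ we add $(x_j \limply \lbox x_j) \land (\lnot x_j \limply \lbox \lnot x_j)$ at depth $i$.
\end{itemize}

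At the leaves (depth $m$) we require $\beta$. Formally, we define $\theta_m = \beta$ and
\[
\theta_i = C_i \land \lbox \theta_{i+1},
\]
where $C_i$ is the conjunction of the quantifier constraint for level $i+1$ and the persistence constraints. We set $\phi_\Phi = \theta_0$. Its size is polynomial if we write $\lbox$ as $\lnot(\Sharp \lnot\psi \geq 1)$, since each level adds only linearly many symbols. Its modal depth is exactly $m$.

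For correctness, we argue by induction on $m-i$. A depth-$i$ vertex carrying an assignment $a$ to $x_1..x_i$ can be the root of a depth-$(m-i)$ binary subtree satisfying $\theta_i$ iff the QBF suffix $Q_{i+1}x_{i+1}\dots\beta$ is true under $a$.
\begin{itemize}
\item \textbf{Universal step.} Because the tree has exactly two children, the constraint forces one child with each value of $x_{i+1}$. Persistence makes both children extend $a$, so the subtree exists iff both extensions succeed.
\item \textbf{Existential step.} Both children carry the same extension, so the subtree exists iff some extension succeeds.
\end{itemize}
At depth $0$ the root carries no assignment, since $x_1,\dots,x_m$ are free there, and we conclude that $|\setoftreesatisfying{2}{\phi_\Phi}|\ge 1$ iff $\Phi$ is true.

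I expect the main obstacle to be bookkeeping rather than ideas. The rigid branching factor $2$ must be treated carefully at existential levels: a child that is not a witness still has to satisfy $\lbox\theta_{i+1}$, so we duplicate the witness rather than leave a free child. We must also make sure the modal depth matches $m$ exactly, so that $\mathcal T_{2,md(\phi)}$ is the intended class. If needed, we pad with a trivial $\Sharp\top\geq 0$ conjunct to adjust the depth.
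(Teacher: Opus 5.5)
Your reduction is correct, but it takes a more self-contained route than the paper. The paper's proof is a one-line appeal to the folklore fact (Lemma 14 of \cite{DBLP:conf/ijcai/SalzerST25}) that satisfiability of standard modal logic over trees of branching factor $2$ is PSPACE-hard, together with the observation that standard modal logic is a fragment of $K^\#$. You instead reprove that lemma from scratch with a Ladner-style encoding of QBF.

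The cited route is shorter, but it leaves implicit that the hard instances fit the rigid class $\mathcal T_{2, md(\phi)}$: every internal vertex has exactly two children and every leaf sits at depth $md(\phi)$. This does not follow from hardness over arbitrary Kripke models. A satisfiable modal formula may need three pairwise distinct successors, or a dead end before depth $md(\phi)$, and neither is available in $\mathcal T_{2, md(\phi)}$.

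Your version addresses exactly this point:
\begin{itemize}
\item universal levels use up both children;
\item existential levels duplicate the witness;
\item persistence keeps both children consistent with the parent's assignment;
\item $md(\theta_i) = m-i$ holds by construction, so the padding you mention is never needed.
\end{itemize}
If you drop the optional graded constraint at existential levels, $\phi_\Phi$ lies in plain modal logic, so you have in effect proved the cited lemma itself.

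One small wording point: your existential step says that both children carry the same extension, which presupposes that optional constraint. Without it, the argument still goes through. For the ``if'' direction, give both children the witness value. For the ``only if'' direction, pick either child and use persistence plus the induction hypothesis.
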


\begin{proof}
Reduction from the satisfiability problem of standard modal logic in a tree of branching~2 which is PSPACE-hard (this is folklore but stated formally in Lemma 14 of \cite{DBLP:conf/ijcai/SalzerST25}).
\end{proof}

Now counting is important. But usually we want to deal with proportions, for instance knowing that $\proportionoftrees{n}{\phi} \geq 1/2$. This problem reduces to know that $|\setoftreesatisfying{n, md(\phi)}{\phi}| \geq |\setoftreesatisfying{n,md(\phi)}{}|/2$. More generally, we aim at knowing whether $|\setoftreesatisfying{n, md(\phi)}{\phi}| \geq \kappa$. However, there is a trick in the representation of $\kappa$. Indeed, note that $\kappa$ represented in binary is not sufficient since $ |\setoftreesatisfying{n,md(\phi)}{}|/2$ is $2^{\AP \times (2^{md(\phi)+1}-1) - 1}$ which requires an exponential number of bits in $|\phi|$ to written down. However, that number is easily represented by $1\underbrace{0\dots \dots \dots \dots \dots \dots \dots \dots \dots \dots 0}_{\text{$\AP \times (2^{md(\phi)+1}-1) - 1$ numbers of 0}}$. And $\AP \times (2^{md(\phi)+1}-1) - 1$ can be written down with a polynomial number of bits. In other words, we represent $\kappa$ with a RLE (Run-length encoding) representation where each blocks of 0 and 1 are represented by the digit (0 or 1) and the number of occurrences of that digit.

\begin{theorem}
Given $\phi, n, \kappa$, deciding whether $|\setoftreesatisfying{2}{\phi}| \geq \kappa$ is in PSPACE, when $n$ is in binary, and $\kappa$ is in RLE compression representation.
 \end{theorem}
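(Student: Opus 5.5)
The plan is to combine \Cref{theorem:counttreessharpPSPACE} with the classical fact that \Sharp\PSPACE functions can be evaluated bit by bit in polynomial space. Concretely, let $N = |\setoftreesatisfying{n, md(\phi)}{\phi}|$; I read the $\setoftreesatisfying{2}{\phi}$ in the statement as this quantity. By \Cref{theorem:counttreessharpPSPACE}, $N$ is the number of accepting executions of $\guesstree(\phi, n)$, which is a nondeterministic machine running in space polynomial in $|\phi| + \log n$.

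\textbf{Bound on the bit-length of $N$.} The number $N$ has at most $L = \AP \times (1 + n + \dots + n^{md(\phi)})$ bits, since this is the number of valuation bits in a tree of branching factor $n$ and depth $md(\phi)$. The quantity $L$ is exponential in the input, but its binary representation has polynomially many bits. Hence an index $i \le L$ into the binary expansion of $N$ fits in polynomial space. Likewise, the $i$-th bit of $\kappa$ can be computed in polynomial space from its RLE representation: scan the blocks and sum their lengths, using binary counters of polynomial size.

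\textbf{Deciding $N \geq \kappa$.} Compare the two numbers from the most significant bit downward. For $i = \max(L, |\kappa|)$ down to $0$, compute bit $i$ of $N$ and bit $i$ of $\kappa$, and stop at the first position where they differ. The loop counter has polynomial size, and each iteration reuses its workspace. So it only remains to show that the map $i \mapsto$ ($i$-th bit of $N$) is computable in polynomial space.

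\textbf{Computing the bits of $N$ (main obstacle).} I would first invoke Ladner's theorem that $\Sharp\PSPACE = \mathrm{FPSPACE}$, in its form where the output may be exponentially long but each output bit is computable in polynomial space. If a self-contained argument is preferred, I would instead unfold $\explorealltrees$ into an arithmetic recursion. For a formula $\psi$ of depth $k$, let $N(\psi)$ denote its number of non-rejecting executions. This count is a sum over the root valuation $\nu$ and the final counter value $\sem$ (both polynomial-size objects) of a term $[\,(\nu,\sem)\models_L \psi\,]$ times the number of ways to realise $\sem$ by $n$ children. That number is a sum over sequences of $\DSF$-valuations $V_1,\dots,V_n$ of the product $\prod_j N(descr(V_j))$. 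Grouping the children by their valuation turns it into a sum over multiplicity vectors $(m_V)_V$ of a multinomial coefficient $\binom{n}{(m_V)_V}$ times $\prod_V N(descr(V))^{m_V}$. Each level of this recursion is therefore an iterated sum and product of exponentially many numbers of exponential bit-length, and there are only $md(\phi)$ levels. Iterated addition and multiplication of $m$ numbers of $m$ bits each lie in (uniform) $\mathrm{NC}^1$, hence in space $O(\log m)$ with bit access. Since here $m$ is exponential, this gives polynomial space with on-demand bit access to the inputs. Composing polynomially many such space-bounded bit oracles keeps the total space polynomial, because each level only recomputes the bits it needs. Getting this composition and the uniformity details right is the delicate step. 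If it becomes messy, I would fall back on citing Ladner's result directly.
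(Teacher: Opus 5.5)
Your main route is correct and is essentially the paper's proof. The paper likewise invokes Ladner's $\#\mathrm{PSPACE} = \mathrm{FPSPACE}$ to get a polynomial-space machine producing the exponentially long count $|\mathcal T^{\phi}_{n, md(\phi)}|$, and compares it on the fly with the RLE-encoded $\kappa$; your most-significant-bit-first comparison via polynomial-space bit access is an equivalent way to organise that comparison.

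The optional self-contained unfolding, however, does not give polynomial space as written. The multiplicity vectors $(m_V)_V$ have $2^{k}$ coordinates with $k = |\DSF(\psi)|$, so there are $\binom{n+2^{k}-1}{2^{k}-1}$ of them. This is in general doubly exponential in the input, and a single vector already needs exponential space to write down, so the $O(\log m)$ space bound for iterated arithmetic yields exponential space. A repair would index by counter vectors in $\{0,\dots,n\}^{k}$ instead, for example by computing $\bigl(\sum_V N(descr(V)) \prod_{\psi' : V(\psi') = 1} x_{\psi'}\bigr)^n$ with $O(\log n)$ repeated squarings. Otherwise, simply keep the Ladner citation.
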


\begin{proof}
First by Ladner theorem, we have that $\# PSPACE = FPSPACE$ \cite{DBLP:journals/siamcomp/Ladner89}. It means there is a poly-space machine, given $\phi$, $n$, output on a write-only tape the number of trees with branching $n$ satisfying $\phi$. Note that this number can have an exponential number of bits.
Finally, we compare $\kappa$ to that output number on-the-fly (comparing numbers with RLE compression representation can be performed in time $O(N^2)$ where $N$ is the size of the RLE compression representation, see \cite{DBLP:conf/sequences/ApostolicoLS97}). 
\end{proof}

\subsection{Complexity of Counting Graph Models}
 Unlike the previous sections, which deals with $\Ksharp$ formulas being interpreted on pointed trees, we now generalise and look at pointed directed graphs. We call this problem as \CountingGraph problem. Formally,

\begin{definition}[$\CountingGraph$]
 The \CountingGraph problem is defined as
    \newline \emph{Input}: A $\Ksharp$ formula $\phi$, an integer $n$\\
    \emph{Output}: $|\setofgraphssatisfying{n}{\phi}|$, the number of pointed $n$-state Kripke models that satisfy $\phi$, where the set of propositions is $\AP = \AP(\phi)$.
\end{definition}

In the previous definition, remember that the domain of a Kripke model of size $n$ is~$\set{1, \dots, n}$ and that the designated point $s$ is always 1. We also suppose that the set of propositions is implicitly given by the formula $\phi$.




\begin{algorithm}
    \caption{\guesswithbias}
    \label{algo:guessgraph}
    \KwIn{\Ksharp-formula $\phi$, integer $n$ in unary} 
    \KwOut{Guess a pointed Kripke model $\M,s$ such that $|\M| = n$ and \algoaccept if $\M,s\models\phi$ or \algoreject otherwise}
    Non-deterministically guess a pointed graph $(\M,s)$ of $n$ vertices and on set of propositions $\AP = \AP(\phi)$\;
    \algoaccept if $\M,s\models\phi$ else 
    \algoreject\;
    \label{algorithm:guesswithbiais}
\end{algorithm}

\newcommand{\nbtotalgraphsexpression}
{2^{n^2 + n \times |\AP|}}

\begin{theorem}
\label{theorem:countgraphPsharp-c}
$\CountingGraph$ is \# \PTIME-complete when $n$ is given in unary.
\end{theorem}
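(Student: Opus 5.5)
Membership and hardness are handled separately.

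For membership in $\Sharp\PTIME$ we use \Cref{algo:guessgraph} (\guesswithbias). Since $n$ is given in unary, a pointed graph on $\set{s_1,\dots,s_n}$ with $s_1$ designated is described by $n^2 + n\times|\AP(\phi)|$ bits, a polynomial in the input size. We guess these bits non-deterministically, one per bit, so that distinct bit strings correspond to distinct accepting branches. This gives a one-to-one correspondence between accepting executions and elements of $\setofgraphssatisfying{n}{\phi}$. It remains to check $\M,s_1\models\phi$ in polynomial time, by bottom-up labelling. We process the subformulas of $\phi$ by increasing size. For each subformula $\psi$ and each vertex $s$ we compute the truth value of $\psi$ at $s$. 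For each arithmetic expression $\xi$ and each vertex we compute $[[\xi]]_{\M,s}$: each term $\Sharp\chi$ is at most $n$, and the constants $c$ come from the input, so all values have polynomially many bits. This takes time $O(|\phi|\cdot n^2)$ arithmetic operations on polynomial-size numbers. Hence the counting function is the number of accepting paths of a polynomial-time NTM.

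For $\Sharp\PTIME$-hardness we reduce from $\Sharp$SAT with a parsimonious reduction, up to an easily computable factor. Given a CNF $F$ over $x_1,\dots,x_k$, we take propositions $p_1,\dots,p_k$ and set $n=1$, in unary. Then $\mathcal G_1$ consists of the single vertex $s_1$, with or without a self-loop, and a valuation over $\AP$. The formula $\phi_F$ is $F$ with $x_i$ replaced by $p_i$, so $|\setofgraphssatisfying{1}{\phi_F}| = 2\cdot\#F$. To remove the factor $2$ we can conjoin $\lnot\ldiamond\top$, i.e. $\lnot(\Sharp\top \geq 1)$ with $\top$ written as $p_1\lor\lnot p_1$. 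This forces no self-loop and makes the count exactly $\#F$. In any case the count differs from $\#F$ only by a factor recoverable in polynomial time. This suffices for $\Sharp\PTIME$-hardness under the usual polynomial-time (Turing or parsimonious-up-to-division) reductions.

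I expect the main subtlety to be this: ensuring that the guessing in \guesswithbias is exactly one branch per graph, so that the count is exact, and verifying that arithmetic with binary integer constants stays polynomial. Both are routine once stated. The hardness direction is straightforward, since propositional logic sits inside $\Ksharp$.
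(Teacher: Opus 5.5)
Your proof is correct and follows the paper's route: membership via the guess-and-check machine $\mathsf{guessGraph}$ (you also usefully explain why model checking with binary constants stays polynomial), and hardness by mapping a \#SAT instance to the same formula with $n=1$. Your handling of the self-loop is more careful than the paper's, which claims $|\mathcal G_1^{\phi}| = \#\mathrm{SAT}(\phi)$ even though each satisfying valuation yields two one-vertex pointed graphs; conjoining $\lnot(\#\top\geq 1)$, or dividing by $2$, repairs this.
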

\begin{proof}
    \noindent\newline\textbf{Upper Bound.} \Cref{algo:guessgraph} gives a nondeterministic algorithm $\guesswithbias$ taking as an input a formula $\phi$, an integer $n$ in unary, it guesses an $n$-state pointed Kripke model and checks if it satisfies $\phi$. 
    Each pointed graph $(\M, s)$ of $n$ vertices can be represented by its adjacency matrix and a valuation at each vertex, we suppose here that the designated point $s$ is always the first one. So there are $\nbtotalgraphsexpression$ possible models of size $n$.
    
          


    \begin{fact}
    $\guesswithbias(\phi,n)$ runs in poly-time in $|\phi|$, and $n$.
    \end{fact}
    \begin{proof}
        Let us analyze the algorithm.
        \begin{itemize}
            \item Line 1. This step consists in guessing a binary word of length $n^2 + n \times |\mathcal P|$ in time $n^2 + n \times |\mathcal P|$.
            \item Line 2. Checking $G, u \models \phi$ is in time $poly(|\phi|, n)$.
        \end{itemize}
    \end{proof}

    \begin{fact}
    [correctness]
    $|\{(\M,s)\mid\M,s\models\phi,|\M|=n\}|$ is exactly the number of accepting paths of $\guesswithbias(\phi,n)$.
    \end{fact}


    \begin{proof}
        One execution of $\guesswithbias(\phi, n)$ is guessing a distinct Kripke model $(G, u)$. So the number of accepting executions of 
        of $\guesswithbias(\phi, n)$ is
        $|\setofgraphssatisfying{n}{\phi}|$.
    \end{proof}

    \noindent\newline\textbf{Lower Bound. } Recall the problem of  \#SAT of propositional logic is: given a propositional formula $\phi$, compute the number of valuations that makes $\phi$ true. The problem  \#SAT is \#\PTIME-complete \cite{DBLP:journals/tcs/Valiant79}.
    
    We reduce $\emph{\#SAT}$ problem in poly-time to \CountingGraph as follows.  
    Given a formula~$\phi$, we build the \CountingGraph instance $(\phi, n=1)$. Note that the reduction is in poly-time. Finally, we have:  
    $
    \emph{\#SAT}(\phi) = k \mbox{ iff } |\setofgraphssatisfying{1}{\phi}| = k.
    $
\end{proof}

The same \Cref{algo:guessgraph} gives an exponential blow-up in the upper bound when~$n$ in binary.

\begin{theorem}
    $\CountingGraph(\phi,n)$ is in \#\EXPTIME when $n$ is given in binary.
\end{theorem}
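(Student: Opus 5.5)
The plan is to reuse \Cref{algo:guessgraph} ($\guesswithbias$) unchanged, now reading $n$ in binary, and to check that it is a nondeterministic machine running in exponential time whose number of accepting executions is $|\setofgraphssatisfying{n}{\phi}|$. Since \#\EXPTIME is the class of functions counting accepting paths of nondeterministic exponential-time machines, this suffices.

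\textbf{Correctness.} The fact established in the proof of \Cref{theorem:countgraphPsharp-c} still holds: each execution guesses a binary word of length $n^2 + n\times|\AP|$ encoding the adjacency matrix and the valuations, with designated point $s_1$. Distinct words give distinct pointed graphs, and every pointed graph of $\mathcal G_n$ arises from exactly one word. The number of accepting executions is therefore exactly $|\setofgraphssatisfying{n}{\phi}|$. The argument never uses the encoding of $n$, so it transfers verbatim.

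\textbf{Running time.} Let $m$ be the number of bits of $n$, so $n \leq 2^m$.
\begin{itemize}
\item Line 1 guesses $n^2 + n|\AP| \leq 2^{2m} + 2^m|\phi|$ bits. This is exponential in the input size.
\item Line 2 model checks $\M, s_1 \models \phi$ by the standard bottom-up labelling. For each subformula $\psi$ (in order of increasing size) and each vertex $u$, compute the truth value of $\psi$ at $u$, or the integer value of an arithmetic subterm $\xi$ at $u$.
\begin{itemize}
\item A term $\Sharp\psi$ at $u$ costs $O(n)$ additions of numbers bounded by $n$.
\item A term $c\times\xi$ or $\xi_1+\xi_2$ involves integers whose bit length is polynomial in $|\phi|$ and $m$. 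Indeed, values are bounded by a product of the constants of $\phi$ times $n$, so they have $O(|\phi|(m + \max\log|c|))$ bits.
\end{itemize}
The total is $O(|\phi|\cdot n^2 \cdot \mathrm{poly}(|\phi|, m))$, which is $2^{O(m)}\mathrm{poly}(|\phi|)$.
\end{itemize}
The whole machine thus runs in time $2^{\mathrm{poly}(|\phi| + m)}$.

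The only point needing care, and the mild obstacle, is bounding the sizes of the intermediate arithmetic values so that each arithmetic operation stays polynomial in the input size rather than in $n$. Once we note that each nesting of $c\times$ multiplies the bit length additively by $\log|c|$, and each $\Sharp$ contributes at most $\log n = O(m)$ bits, the bound follows by induction on $\xi$. Combining correctness with the exponential time bound gives membership in \#\EXPTIME.
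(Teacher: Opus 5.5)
Your proposal is correct and is the paper's own argument: the paper simply observes that the same \Cref{algo:guessgraph}, run with $n$ in binary, incurs only an exponential blow-up. That blow-up occurs in the $n^2 + n|\AP|$ guessed bits and in the model-checking step. Your bound on the bit length of intermediate arithmetic values, which keeps model checking within $2^{O(m)}\cdot\mathrm{poly}(|\phi|)$, is a detail the paper leaves implicit, and you handle it correctly.
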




\begin{theorem}
Given $\phi, n$, deciding whether $|\setofgraphssatisfying{\phi}{n}| \geq 1$ is \PSPACE-hard when $n$ is in binary.
\end{theorem}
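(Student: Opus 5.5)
We reduce from the satisfiability problem of standard modal logic $K$, which is \PSPACE-hard; the hardness already holds over trees of branching factor~2 (Lemma 14 of \cite{DBLP:conf/ijcai/SalzerST25}, the result used for the previous tree hardness theorem). Given a modal formula $\psi$ of modal depth $d$, we output the pair $(\psi', n)$ with $n = 2^{d+1}-1$, which is the number of vertices of a full binary tree of depth $d$. The number $n$ in binary has $O(d)$ bits, so the reduction runs in polynomial time. This is exactly where binary encoding matters: the witness graph may be exponential in $|\psi|$, but its size fits in polynomially many bits. The formula $\psi'$ will be $\psi$ itself, or a slight modification of it; it must be true in some $n$-vertex pointed graph iff $\psi$ is satisfiable.

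The easy direction goes first. If $\psi$ is satisfiable, it is satisfiable in a tree of branching factor~2 and depth~$d$. That tree has at most $n$ vertices, and we can pad it to exactly $n$ vertices without affecting truth at the root. The padding is harmless because of the modal semantics: truth of $\psi$ at $s_1$ depends only on the part of the graph reachable from $s_1$ within $d$ steps. So we add isolated vertices (no edges into them from the tree, arbitrary valuation), relabel so that the root is $s_1$, and obtain a member of $\setofgraphssatisfying{n}{\psi}$.

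The converse is immediate. Any pointed graph satisfying $\psi$ witnesses the satisfiability of $\psi$ in $K$, since $K$ is the logic of all Kripke frames and imposes no condition on the relation. So $\psi' = \psi$ suffices, with one caveat about the proposition set. $\AP$ is fixed to $\AP(\psi)$, which is harmless because the valuation of the padding vertices can be chosen freely. The only checks needed are that $\lbox$ and $\ldiamond$ are expressible in $\Ksharp$, which holds by the abbreviations in the background section, and that the abbreviation $\ldiamond\chi := \#\chi \geq 1$ causes only a linear blow-up.

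The main obstacle I expect is justifying that the branching-2 depth-$d$ tree model property gives a model with at most $2^{d+1}-1$ vertices. This holds for the specific PSPACE-hard fragment used, the branching-2 tree satisfiability of the cited lemma, and we would invoke that lemma directly. If that turns out problematic, we would instead take $n$ to be the standard bound on the size of tree models for $K$: at most $|\psi|^{d}$ children per level, giving $n \le |\psi|^{d+1}$, which still has polynomially many bits. The padding argument is unchanged in that case.
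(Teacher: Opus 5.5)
Your main reduction $(\psi, 2^{d+1}-1)$ has a genuine gap: its two directions refer to two different source problems.

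The forward direction (``if $\psi$ is satisfiable, it is satisfiable in a tree of branching factor 2 and depth $d$'') is true only when the source problem is satisfiability over branching-2 trees. For plain $K$-satisfiability it is false. The converse (``any pointed graph satisfying $\psi$ witnesses satisfiability in $K$'') yields only plain $K$-satisfiability, not a branching-2 tree model. So with $\psi' = \psi$ and $n = 2^{d+1}-1$, the reduction is correct for neither problem.

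Here is a concrete failure. Let $\chi_m := \lbigand_{i=1}^{m} \ldiamond p_i \land \lbox \lbigand_{i<j} \lnot(p_i \land p_j)$, which has modal depth 1, so $n=3$.
\begin{itemize}
\item $\chi_4$ is $K$-satisfiable but has no 3-vertex model, because $s_1$ needs four distinct successors. So the reduction fails as a reduction from $K$-satisfiability.
\item $\chi_3$ has no model in a tree of branching 2. Yet it holds at $s_1$ in the 3-vertex graph with edges $s_1\to s_1$, $s_1\to s_2$, $s_1\to s_3$ and valuation $V(s_i)=\{p_i\}$. So the reduction also fails as a reduction from branching-2 tree satisfiability.
\end{itemize}
The obstacle you flag, that a depth-$d$ binary tree has at most $2^{d+1}-1$ vertices, is trivial and is not where the argument breaks.

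There are two ways to close the gap.

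\emph{The paper's route.} The paper keeps branching-2 tree satisfiability as the source problem and outputs $(\phi \land \tau, n)$. Here $\tau$ uses fresh level markers $\ell_i$ and branching bits $x_i$ so that a model of $\phi\land\tau$ of size $n$ is forced to be a binary tree. That is the ``slight modification'' your $\psi'$ would actually need for the converse to go through.

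\emph{Your fallback, used consistently.} Your fallback is correct if you commit to it for both directions. Reduce from plain $K$-satisfiability and take $n := \sum_{i=0}^{d} k^i$, where $k$ is the number of $\ldiamond$-subformulas. Then $n \le |\psi|^{d+1}$, so $n$ has polynomially many bits in binary. The forward direction follows from the tree-model property of $K$ (depth at most $d$, at most $k$ children per node) plus padding with isolated vertices. The converse is immediate, exactly as you argue.

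This second version needs no auxiliary formula $\tau$ and is arguably cleaner than the paper's construction. As written, though, you present it only as a contingency for the wrong obstacle, while your primary argument $(\psi, 2^{d+1}-1)$ is incorrect.
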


\begin{proof}
By reduction from satisfiability problem of standard modal logic in a tree of branching~2 which is PSPACE-hard. Let $\phi$ be a formula of standard modal logic. Take $n$ to be the size of trees with branching $2$ of depth $d = md(\phi)$, 
i.e. $n = 2^{d} - 1$.
Take a formula $\tau$ that says that the model contains a tree of branching factor 2. Typically, $\tau$ is the formula $\lbigand_{i=0}^{d-1} \lbox^{i} (\lnot \ell_0 \land \dots \lnot \ell_{i-1} \land \ell_i \land \lnot \ell_{i+1} \land \dots \ell_{d-1}) \land  [ \ldiamond x_{i} \land \ldiamond \lnot x_{i} \land \lbigand_{j<i} (x_j \limply \lbox x_j) \land (\lnot x_j \limply \lbox x_j)$ where $\ell_0, \dots, \ell_{d-1}, x_0, \dots, x_{d-1}$ are fresh propositions (see \cite{DBLP:books/cu/BlackburnRV01}). The idea of $\tau$ is that there is a 2-branching over the Boolean value of $x_i$ while values of $x_j$ with $j < i$ are propagated. Leaves correspond to all valuations over $x_0, \dots, x_{d-1}$. The proposition $\ell_i$ means that we are at level $i$.
The reduction is $tr(\phi) := (\phi \land \tau, n)$. If $\phi$ is satisfiable in a tree of branching 2, then $\phi \land \tau$ is satisfiable in a tree of branching 2 which is a graph of size $n$. Conversely, if $\phi \land \tau$ is true in a graph of size $n$, then that graph must be a tree of branching 2.
\end{proof}

%% file: section_convergence.tex
\newcommand{\muT}{\mu^{\mathcal T}}
We now study the convergence of $\muT_n(\phi)$ when the branching factor $n$ goes to infinity.

\subsection{Counterexample for convergence}
\label{subsection:counterexampleconvergence}

In this subsection, we come up with an example of a $\Ksharp$ formula $\phi$ such that $\lim_{\brf\rightarrow\infty}\muT_n(\phi)$  does not exist.
%
Our candidate formula $\phi$ is $\# \phi_e \geq \# \phi_o$ where $\phi_e$ and $\phi_o$ are:
\begin{center}
$\phi_e := \# \top = 2\# p$
\hspace{1cm}
$\phi_o := \# \top = 2 \# p - 1$.
\end{center}
Intuitively, $\phi_e$ is a formula that asserts that the number of children to the current tree-node should be exactly twice the number of children where $p$ holds. In other words, number of children where $p$ holds is equal to the number of children where it does not.
Similarly, $\phi_o$ asserts that the number of children where $p$ holds is exactly $1$ less than the number of children where it does not. 

Unfortunately, $\proportionoftrees n {\# \phi_e \geq \# \phi_o}$ does not converge when $n$ goes to infinity: its value flips between 1 and numbers that are converging to 0; see Figure~\ref{figure:frenchambulance}.

\begin{figure}
    \centering
\begin{tikzpicture}
\begin{axis}[
    yscale=0.6,
    title={},
    xlabel={$n$},
    ylabel={$\muT_n(\#\phi_e \geq \#\phi_o)$},
    xmin=0, xmax=16,
    ymin=0, ymax=1.15,
    axis lines=left,
    grid=both,
    xtick={1,2,3,4,5,6,7,8,9,10,11,12,13,14,15},
    ytick={0, 0.25, 0.5, 0.75, 1.0},
]

\addplot+[
    ycomb,
    blue,
    thick,
    mark=*,
    mark options={fill=blue}
] coordinates {
    (1,  0.500)
    (2,  1.000)
    (3,  0.244)
    (4,  1.000)
    (5,  0.154)
    (6,  1.000)
    (7,  0.107)
    (8,  1.000)
    (9,  0.079)
    (10, 1.000)
    (11, 0.060)
    (12, 1.000)
    (13, 0.047)
    (14, 1.000)
    (15, 0.038)
};

\end{axis}
\end{tikzpicture}
 \caption{Values of $\muT_n(\# \phi_e \geq \# \phi_o)$ depending on the branching factor $n$.}
    \label{figure:frenchambulance}
\end{figure}

\begin{proposition}
$\lim_{\brf\rightarrow\infty}\proportionoftrees n {\# \phi_e \geq \# \phi_o}$ does not exist. 
\end{proposition}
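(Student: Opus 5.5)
The plan is to show that along even branching factors $n$ the proportion is exactly $1$, while along odd $n$ it tends to $0$. Two subsequences with different limits rule out convergence. The formula $\phi = \#\phi_e \geq \#\phi_o$ has modal depth $2$, so we work with pointed trees of depth $2$ and branching factor $n$. Only the valuation of $p$ at the depth-$2$ vertices matters. At a depth-$1$ vertex $v$, write $k_v$ for the number of its $n$ children where $p$ holds. Then $\phi_e$ holds at $v$ iff $n = 2k_v$, and $\phi_o$ holds at $v$ iff $n = 2k_v - 1$.

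\textbf{Even $n$.} Here $n = 2k_v - 1$ is impossible, so no child of the root satisfies $\phi_o$. Hence $\#\phi_o = 0$ at the root, and $\#\phi_e \geq 0$ holds trivially. Every tree satisfies $\phi$, so $\proportionoftrees{n}{\phi} = 1$ for all even $n$.

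\textbf{Odd $n$.} Now $\phi_e$ never holds, so $\#\phi_e = 0$ and $\phi$ holds iff $\#\phi_o = 0$. That is, no depth-$1$ vertex has exactly $(n+1)/2$ $p$-children. The labellings of different depth-$1$ subtrees are independent and uniform, and the valuations of the root and of other propositions factor out of the proportion. Let
$$q_n = \binom{n}{(n+1)/2} 2^{-n},$$
the probability that one fixed depth-$1$ vertex satisfies $\phi_o$. Then
$$\proportionoftrees{n}{\phi} = (1-q_n)^n .$$
By Stirling's formula, or the standard central binomial estimate, $q_n \sim \sqrt{2/(\pi n)}$. Therefore $(1-q_n)^n \leq e^{-n q_n} \leq e^{-c\sqrt n}$ for some constant $c > 0$ and all large odd $n$, which tends to $0$.

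The only step requiring care is this asymptotic lower bound on $q_n$. A crude bound suffices: the central binomial coefficient is at least $2^n/(n+1)$, which gives $n q_n \geq n/(n+1)$. That is not enough on its own, so I would use the sharper elementary bound $\binom{2m}{m} \geq 4^m/(2\sqrt m)$ to obtain $q_n \geq c'/\sqrt n$. With both subsequences established, the limit does not exist.

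A side point is that the independence and factorisation argument must be made consistent with the paper's count $|\mathcal T_{n, md(\phi)}|$. The ratio is nonetheless just the product probability under uniform labelling, so this step is routine.
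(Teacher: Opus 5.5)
Your proof is correct and follows the paper's route. You split by parity: for even branching $\#\phi_o = 0$, so the proportion is exactly $1$; for odd branching $\#\phi_e = 0$, so the proportion is $(1-q_n)^n$ with $q_n = \binom{2m-1}{m}/2^{2m-1}$ and $n = 2m-1$, and this tends to $0$. The only difference is the final asymptotic step: your bound $(1-q_n)^n \le e^{-nq_n}$ together with $\binom{2m}{m} \ge 4^m/(2\sqrt{m})$ is a cleaner, fully rigorous replacement for the appendix's Stirling-plus-logarithm computation, which treats $\binom{2n}{n}/4^n \sim 1/\sqrt{\pi n}$ as an equality.
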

\begin{proof}
    We only consider  trees with depth $2$. This is because the formula has modal depth of $2$, hence cannot evaluate nodes that lie deeper. We also treat the probability of a tree node holding $p$ true to be equally likely: a fair coin toss. Hence, each leaf node (depth $2$) represents one independent fair coin toss. Each node in the level above (depth $1$) represents an experiment that involves exactly $\brf$ many independent coin tosses. It is in this level where we evaluate the event space of the experiment based on $\phi_e$ or $\phi_o$. Lastly, in the root (depth $0$), we compare the count of the event spaces. Let us compute the probability of $(\# \phi_e \geq \# \phi_o)$ depending on the parity of the branching factor.
    \begin{itemize}
        \item {Even branching.} 
        When the branching is even, $\phi_o$ is false. There, $\# \phi_o$ is always 0. 
        Thus, formula $\# \phi_e \geq \# \phi_o$ is trivially true. 
        Thus we have, $\muT_{2n}(\# \phi_e \geq \# \phi_o) = 1$.
        
        \item {Odd branching. } 
        As $\#\phi_e$ is zero when the branching is odd, by definition $\muT_{2n-1}(\# \phi_e \geq \# \phi_o)$ is the probability that $\# \phi_o = 0$.
        Now, the probability that $\phi_o$ holds in a node is $\binom{2n-1}{n} / 2^{2n-1}$ (the probability of $n$ among $2n-1$ trials). So the probability of $\# \phi_o = 0$ is the probability of having $\phi_o$ false $2n-1$ times. So
        
        $$\muT_{2n-1}(\# \phi_e \geq \# \phi_o) := \left(1 - \binom{2n-1}{n} / 2^{2n-1}\right)^{2n-1}$$
        which tends to 0 when $n$ tends to $+\infty$ (see appendix \ref{appendix:section:limitoddfrenchambulance}).
    \end{itemize}

\end{proof}






\subsection{Alternative definition of the limit}

The following theorem gives a limit result for the probability of a formula $\phi$ to be true when the number of successors $n$ goes to infinity. Computing the limit of $\muT_n(\phi)$ directly causes technical issues that we will discuss after the following theorem.

The technical solution we propose to study a limit result works as follows. Instead of taking the limit of $\muT_n(\phi)$ when $n$ goes to infinity, we propose $n$ going to infinity at each level in the tree, and define a new quantity denoted by $\mu'_n(\phi)$. The base case is when $\phi$ is Boolean. In that case, $\mu'_n(\phi) := \muT_n(\phi)$ is constant and thus trivially $\lim_{n \rightarrow +\infty} \mu'_n(\phi)$ exists.
Now for the inductive case, we define $\mu'_n(\phi)$ as the limit when the branching factor at the root goes to infinity, provided that a direct subformula $\psi$ is true with probability $\lim_{n \rightarrow +\infty} \mu'_n(\psi)$ which exists by induction.

It turns out that we define $\mu'_n(\phi)$ \emph{inside} the following theorem because the existence $\lim_{n \rightarrow +\infty} \mu'_n(\psi)$ for the direct subformulas $\psi$ is actually the conclusion of the theorem: $\lim_{n \rightarrow +\infty}  \mu'_n(\phi)$ exists.

Observe that for all formulas $\phi$ of modal depth 1, $\mu'_n(\phi) = \muT_n(\phi)$.

\begin{definition}
We define $\mu'_n(\phi)$  by induction on $\phi$
    to be the probability that a rooted tree of the form 
    \begin{tikzpicture}[thick,every node/.style={draw, circle, fill=black, inner sep=0.5mm}, baseline = 3mm]
        \node (r) at (0, 0.5) {};
        \node (s1) at (-1, 0) {};
        \node (s2) at (-0.4, 0) {};
        \node[fill=white, draw=none] (sdot) at (0.4, 0) {$\dots$};
        \node (sn) at (1, 0) {};
        \draw[->] (r) -- (s1);
        \draw[->] (r) -- (s2);
        \draw[->] (r) -- (sn);
    \end{tikzpicture}
    such that for all $\psi\in\DSF(\phi)$, the probability that some successor satisfies $\psi$ is $\lim_{n \rightarrow +\infty} \mu'_n(\psi)$.
\end{definition}

The previous definition is well-formed because of the following theorem, and we kept the definition apart from the theorem for better clarity. If the limit $\lim_{n \rightarrow +\infty}  \mu'_n(\phi)$ exists, then it is written~$\mu'_\infty(\phi)$.

\begin{example}
Following the counterexample given in \Cref{subsection:counterexampleconvergence}, 
    note that $\mu'_\infty(\# \phi_e \geq \# \phi_o)$ exists. When the branching factor goes to infinity, the probability that $\phi_e$ holds at the root tends to $0$ (it becomes less and less probable to have exactly half of the successors to satisfy $p$). Similarly, the probability of $\phi_o$ also tends to 0. Therefore,   $\mu'_\infty(\phi_e) = \mu'_\infty(\phi_o) = 0$. Hence, $\mu'_n(\# \phi_e \geq \# \phi_o) = 1$ for any $n$. This implies $\mu'_\infty(\# \phi_e \geq \# \phi_o) = 1$.
\end{example}

\begin{theorem}\label{theorem:muprimeconverge}
   For all $\Ksharp$-formulas $\phi$, $\lim_{n \rightarrow +\infty}  \mu'_n(\phi)$ exists.
\end{theorem}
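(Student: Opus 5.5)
The plan is a structural induction on $\phi$. I will read the definition of $\mu'_n$ in the natural joint way. The root gets a uniform valuation over $\AP(\phi)$. Each of its $n$ successors independently receives a $\DSF(\phi)$-valuation $V$ with probability $\pi(V) := \mu'_\infty(descr(V))$.

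The key choice is to strengthen the induction so that it yields these joint probabilities, not only the marginals of single direct subformulas. This costs nothing extra: $\Ksharp$ is closed under Boolean combinations, so the existence of $\mu'_\infty$ for every formula of smaller modal depth already gives $\pi(V)$ for each $V$. These values sum to $1$, since the formulas $descr(V)$ are mutually exclusive and exhaustive. So I will prove the statement by induction on modal depth, for all formulas of that depth at once. The base case (Boolean $\phi$) is immediate, since $\mu'_n(\phi)$ is constant.

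For the inductive step, fix $\phi$ and condition on the root valuation $\nu$; there are finitely many of these, so it suffices to show convergence for each one. Once $\nu$ is fixed, every indicator $1_\chi$ that is not under a $\Sharp$ becomes a constant. Every atom $\xi\geq 0$ then becomes a linear inequality $\sum_V a_V N_V + c \geq 0$, where $N = (N_V)_V$ is the multinomial$(n,\pi)$ count vector of successor types. Thus $\phi$ becomes a fixed Boolean combination of finitely many events $E_i^n = \{a_i\cdot N + c_i \geq 0\}$. It is enough to show that the joint law of the indicators $(1_{E_i^n})_i$ converges.

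To get this, I will write $N = n\pi + \sqrt n\, Z_n$. By the multivariate CLT, $Z_n$ converges in distribution to a centered Gaussian $Z$ with covariance $\mathrm{diag}(\pi)-\pi\pi^T$. Each atom then falls into one of three cases.
\begin{itemize}
\item If $a_i\cdot\pi \neq 0$, the linear term dominates, so $E_i^n$ has probability tending to $1$ or $0$ according to its sign.
\item If $a_i\cdot\pi = 0$ but $\mathrm{Var}(a_i\cdot Z) > 0$, then $E_i^n = \{a_i\cdot Z_n + c_i/\sqrt n \geq 0\}$, which behaves like $\{a_i\cdot Z \geq 0\}$.
\item If $a_i\cdot\pi = 0$ and $\mathrm{Var}(a_i\cdot Z) = 0$, then $a_i$ is constant on the support of $\pi$ and that constant equals $a_i\cdot\pi = 0$. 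Almost surely $N_V = 0$ outside the support, so $a_i\cdot N = 0$ and $E_i^n$ is the deterministic event $\{c_i\geq 0\}$ for every $n$.
\end{itemize}
The joint statement follows from Slutsky plus the Portmanteau theorem, applied to the finitely many nondegenerate atoms together. The relevant set is an intersection of half-spaces or their complements, and its boundary lies in a finite union of hyperplanes $\{a_i\cdot z = 0\}$ with $a_i\cdot Z$ nondegenerate, so it has Gaussian measure zero. Atoms in the first and third cases are asymptotically deterministic and can simply be absorbed. Hence $P(\phi \mid \nu)$ converges, and averaging over $\nu$ shows that $\mu'_\infty(\phi)$ exists.

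I expect the main obstacle to be these boundary and degenerate cases: atoms with $a_i\cdot\pi = 0$, where the $O(1)$ constant $c_i$ and the inclusive inequality could in principle make the limit oscillate. This is exactly what breaks convergence of $\muT_n$ in \Cref{subsection:counterexampleconvergence}. The argument above removes it because $\pi$ no longer depends on $n$: every zero-drift atom is either Gaussian-nondegenerate, with a null boundary, or almost surely constant.

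I will also check that dropping the $c_i/\sqrt n$ shift is legitimate when several atoms are taken jointly. Slutsky applied to the vector $(a_i\cdot Z_n + c_i/\sqrt n)_i$ handles this. Finally, I will verify that $\pi$ is a genuine probability vector, in case some $\pi(V) = 0$. Such types just lie outside the support and do not affect the argument.
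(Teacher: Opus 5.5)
Your proof is correct. It shares the paper's skeleton: induction on modal depth, a multivariate CLT for the successor counts, a trichotomy on the sign of the drift, and Portmanteau. Where it differs is in the bookkeeping, and the differences matter.

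\textbf{Joint law of successor types.} The paper uses the count vector indexed by $\DSF(\phi)$, with covariance entries $\mu'_\infty(\psi\land\psi')-\mu'_\infty(\psi)\mu'_\infty(\psi')$. This tacitly assumes a joint law of successor types that its definition of $\mu'_n$ leaves unspecified. Your multinomial over full $\DSF(\phi)$-types, with $\pi(V)=\mu'_\infty(descr(V))$, makes that law explicit. It also shows why the induction must range over all formulas of smaller depth.

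\textbf{Case analysis.} You treat atoms one at a time, so you cover the mixed situation where some rows have zero drift and the others have positive drift. The paper's three cases (some row negative, all positive, all zero) omit this situation.

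\textbf{Degenerate case.} Your third case (zero drift and zero variance) repairs a real flaw in the paper. Suppose a row $a$ of $A$ has $a\cdot\mu'=0$ and $a^t\Sigma a=0$. Then the hyperplane $\{z : a\cdot z=0\}$, which is part of the boundary of $\{z : Az\geq 0\}$, carries all the mass of the limiting Gaussian. Portmanteau therefore does not apply, and the paper's value $P(A\mathbf Z\geq 0)$ can be wrong. Take $\Sharp\top-\Sharp p-\Sharp\lnot p\geq 1$: there $A\mu'=0$ and $A\Sigma A^t=0$, so the paper's formula gives $1$. But the left-hand side is identically $0$, so $\mu'_n=0$ for every $n$. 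Your argument correctly returns the limit $0$.

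\textbf{A slip to fix.} An indicator $1_\chi$ lying outside every $\Sharp$ is constant given $\nu$ only when $\chi$ is Boolean. If $\chi$ itself contains unnested $\Sharp$-terms, its truth value depends on $N$. The fix is to case-split on the truth value of each such $\chi$, innermost first. Each such $\chi$ is again a Boolean combination of events $\{a\cdot N+c\geq 0\}$, so the reduction of $\phi$, given $\nu$, to a fixed Boolean combination of finitely many such events goes through unchanged.

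\textbf{A justification to add.} Your claim $\sum_V\pi(V)=1$ holds because all the formulas $descr(V)$ share the same propositions and the same direct subformulas. Hence, for each $n$, the values $\mu'_n(descr(V))$ are probabilities of the cells of a partition in a single probability space, and the limits inherit the sum $1$.
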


\begin{proof}

As per the syntax of $\Ksharp$, each formula is a boolean combination of literals and linear equations with $\#\psi$ as variables. Hence, we can write a $\Ksharp$ formula $\varphi$ as a DNF:
$$\phi \rightsquigarrow \bigvee_j (S_j \land \beta_j)$$
where the $S_j$'s are mutually exclusive systems (conjunctions) of inequalities and $\beta_j$ is a Boolean formula. Note that $\beta_j$ and $S_j$ are independent because $\beta_j$ is about the current vertex valuation and $S_j$ is about the successors.

\begin{example}
$\phi := (\# p \geq \# q) \lor (\# p \geq \# r) \land (p \lor \lnot q)$ is rewritten as
\begin{align*}
    [\underbrace{(\# p \geq \# q) \land (\# p < \# r)}_{S_1} \land \underbrace{(p \lor \lnot q)}_{\beta_1}] \lor \\
[\underbrace{(\# p < \# q) \land (\# p \geq \# r)}_{S_2} \land \underbrace{(p \lor \lnot q)}_{\beta_2}] \lor
\\
[\underbrace{(\# p \geq \# q) \land (\# p \geq \# r)}_{S_3} \land \underbrace{(p \lor \lnot q)}_{\beta_3}]
\end{align*}
\end{example}
For the rest of the proof, we assume that the given formula $\varphi$ is in DNF.

\noindent The proof proceeds by induction on the $\#$-modal depth. For $\#$-modal depth 0, the formula is Boolean. Therefore $\mu'_\brf(\phi)$ is constant and thus converges.

Now, we suppose the limits exist for a modal depth $d-1$. Consider the formula $\phi$ in DNF of modal depth $d$.


Consider the formulas $\psi_1, \psi_2, \ldots \psi_m\in\DSF(\varphi)$ such that they occur in $\varphi$ as $\#\psi_i$ for any $i\in[1..m]$. By induction,  $\mu'_\brf(\psi_i)$ converges to some $\mu'_\infty(\psi_i)$ when $\brf$ goes to $+\infty$.

We have to prove that $\mu '_\brf(\phi)$ converges to some $\mu'_\infty(\phi)$ when $\brf$ goes to $+\infty$.



Since $\varphi$ is in a DNF form, hence $\mu'_\brf(\phi) = \sum_j \mu'_\brf(S_j) \times \mu'_\brf(\beta_j)$. As $\beta_j$ are propositional, $\mu'_\brf(\beta_j)$ is constant. First we prove that if $ \lim_{\brf \rightarrow +\infty} \mu'_\brf(S_j)$ exists for all $j$, then  $ \lim_{\brf \rightarrow +\infty} \mu'_\brf(\phi)$ exists in the following fact. The rest of the proof becomes proving that $ \lim_{\brf \rightarrow +\infty} \mu'_\brf(S_j)$ exists for all $j$.

\begin{fact}
Let $S$ be a system of inequalities. Then $ \lim_{\brf \rightarrow +\infty} \mu'_\brf(S)$ exists.
\end{fact}
\begin{proof}
    For a given $\brf$, and $i=1..\brf$, and $\psi$ be a (direct) subformula, let $X_{\brf, i, \psi}$ be an indicator random variable such that
    $$
        X_{\brf, i, \psi} = 
        \begin{cases}
            1 \text{, if there is a subtree at index } i \text{ of n children satisfying }\psi\\
            0 \text{, otherwise}
        \end{cases}
    $$
    
    Therefore, $\proba{X_{\brf, i, \psi}= 1} = \lim_{\brf \rightarrow +\infty } \mu'_\brf(\psi_i) = \mu'_\infty(\psi_i)$.  
\end{proof}


Let $N_{\brf, \psi}$ be the number of successors satisfying $\psi$ among the at most $\brf$ successors. Formally, $N_{\brf, \psi} = \sum_{i=1..n} X_{\brf, i, \psi}$.
We define the $\DSF(\phi)$-vector $N_\brf := (\tuple{N_{\brf, \psi}}_{\psi \in \DSF(\phi)})^T$, that is a column vector of $\DSF(\varphi)$ dimension.
Formally, it is defined by $N_\brf := \sum_{i=1}X_{\brf, i, \cdot}$.

Let $k$ be the number of inequations in $S$.
The system $S$ for branching $n$ is of the form 
\begin{equation}
    AN_n \geq b
\end{equation}
where $A$ is $k\times \DSF(\phi)$-matrix; $b$ is a $k$ dimension column vector.
We use a version of Lindeberg-Feller Central Limit theorem~\cite{feller1957introduction} to prove the convergence/existence of the limit. Since we have vectors of sum of indicator random variable, we use the Cramer-Wold theorem and Multivariate CLT~\cite{billingsley1995probability}. For that purpose, we need the covariance matrix of every pair of indicator random variable.

Given a $\psi, \psi'\in\DSF(\varphi)$, the covariance between $X_{n,i,\psi}$ and $X_{n,j,\psi'}$ for any~$i,j$:

\begin{align}
\Sigma_{\psi, \psi'} & :=   \begin{cases}
                                \proba{X_{n,i,\psi}= 1 \text{ and } X_{n,j,\psi'}= 1} -  \mu'_\infty(\psi) \times  \mu'_\infty(\psi'), &\text{if } \psi\neq\psi'\\
                                \proba{X_{n,i,\psi}= 1}(\proba{X_{n,i,\psi}= 1}), &\text{if } \psi = \psi'
                            \end{cases}
 \notag\\
    & = \begin{cases}
            \mu'_\infty(\psi \land \psi') - \mu'_\infty(\psi) \times  \mu'_\infty(\psi'), &\text{if } \psi\neq\psi'\\
            \mu'_\infty(\psi)(1 - \mu'_\infty(\psi)), &\text{if } \psi = \psi'
        \end{cases}\label{equation:covariance}
\end{align}
Consider the covariance matrix $\Sigma = [\Sigma_{\psi,\psi'}]_{|\DSF(\varphi)|\times |\DSF(\varphi)|}$     

Since, the setup is a summation of indicator random variables, by applying Cramer-Wold theorem and then translating to Multivariate CLT~\cite{billingsley1995probability} we get:
\begin{align}
    Z_\brf := \frac{N_n - n \mu'}{\sqrt n} \xrightarrow{\brf \rightarrow +\infty}  \mathcal N_{\DSF(\varphi)}(0, \Sigma)\label{eq:CLT}
\end{align}







where $\Sigma$ is the covariance matrix whose entries are indexed by $\DSF(\phi)\times\DSF(\phi)$ and are given by \Cref{equation:covariance}. $\mathcal N_{\DSF(\varphi)}(0, \Sigma)$ denotes the Normal distribution of $|\DSF(\phi)|$ dimension vectors, whose mean is at $0$ and the variance along each of $|\DSF(\phi)|$ dimensions is $\Sigma$, the covariance matrix. 

Since we have our system of inequalities as $AN_\brf\geq b$, we rewrite system in~\Cref{eq:CLT} as:
\begin{align}
 A Z_\brf \geq \frac{b - n A \mu'}{\sqrt n} =  \underbrace{\frac{b}{\sqrt n} - \sqrt n A \mu'}_{c_n}\label{eq:CLTsystem}
\end{align}
Since as $\brf\rightarrow\infty$, the expression $\frac{b}{\sqrt{n}}\rightarrow 0$. Hence the validity of the~\Cref{eq:CLTsystem} depends only on $A\mu'$.

Recall that
$Z_n$, $\mu'$ are $\DSF(\phi)$-vectors, $A$ is $k\times\DSF(\phi)$-matrix; $b$ is a $k$-vector.
Also, the~\Cref{eq:CLT} becomes:
\begin{equation}
    A Z_\brf \xrightarrow{\brf \rightarrow +\infty} \mathcal N_k(0, A\Sigma A^t)\label{eq:CLTdist}
\end{equation}

Since each row in $A$ are the co-efficients of one inequality in the system, depending on how rows in $A\mu'$ (row $i$ denoted by $[A\mu']_i$) behaves, the convergence of the coordinates of $c_n$ can be one of following cases:

\begin{enumerate}
\item If at least for one $i$, $[A \mu']_i<0$, then $c_n$ converges to $+\infty$. So system $S$ is unsatisfiable at the limit. Therefore $\lim_{n\rightarrow\infty}\mu'_n(S) = 0$.

\item If for all $i$, $[A \mu']_i > 0$,  $c_n$ converges to $-\infty$. Hence $AZ_\brf$ grows faster than the RHS in~\Cref{eq:CLTsystem} as $n\rightarrow\infty$ which implies $\lim_{n\rightarrow\infty}\mu'_n(S) = 1$

\item If for every $i$, $[A \mu']_i = 0$, then $c_n$ converges to 0. We have $\mu'_\brf(S) = \proba{AZ_\brf\geq \frac{b}{\sqrt{n}}}$. By~\Cref{eq:CLTdist}, $AZ_\brf\convergesn\mathbf{Z}\sim \mathcal N_k(0, A\Sigma A^t)$ and since $\lim_{n\rightarrow\infty}\frac{b}{\sqrt{n}} = 0$, by Portmanteau theorem~\cite{billingsley1995probability}:
\begin{align*}
    \mu'_\infty(S) & = \lim_{n\rightarrow\infty}\proba{AN_\brf\geq b}\\
            & = \lim_{n\rightarrow\infty}\proba{AZ_\brf\geq\frac{b}{\sqrt{n}}}\\
            & = \proba{A\mathbf{Z}\geq 0}\mbox{, applying Portmanteau theorem}
\end{align*}

\end{enumerate}


\end{proof}

\begin{example}
    With system $\# p \geq 5 \land \# q \geq 2$, we have matrix $A = \begin{pmatrix}
    1 & 0 \\
    0 & 1
    \end{pmatrix}$ and
    $\mu' = \begin{pmatrix}
    1/2 \\ 1/2 
    \end{pmatrix}$. 
    Thus we are in case 2 and all the constraints can be ignored. We are left with a matrix of "relevant" constraints of size $0 \times 0$. Thus $\mu(\phi) = 1$.
\end{example}

\begin{example}
    With system $\# p - \# q \geq 5 \land 2 \# p - 3 \# r \geq 2$, we have matrix $A = \begin{pmatrix}
    1 & -1 \\
    2 & 3
    \end{pmatrix}$ and
    $\mu' = \begin{pmatrix}
    1/2 \\ 1/2 
    \end{pmatrix}$, thus $A\mu' = \begin{pmatrix}
    0 \\ -1/2 
    \end{pmatrix}$.
    As the second coordinate of $A\mu'$ is negative, we are in point 1 and $\mu(\phi) = 0$.
\end{example}

\begin{example}
    With system $\# p - \# q \geq 5 \land 3 \# p - 2 \# r \geq 2$, we have matrix $A = \begin{pmatrix}
    1 & -1 \\
    3 & 2
    \end{pmatrix}$ and
    $\mu' = \begin{pmatrix}
    1/2 \\ 1/2 
    \end{pmatrix}$, thus $A\mu' = \begin{pmatrix}
    0 \\ 1/2 
    \end{pmatrix}$.
    As the second coordinate of $A\mu'$ is positive, so we can ignore the second constraint.
    
    At the limit, system $\# p - \# q \geq 5 \land 3 \# p - 2 \# r \geq 2$ has the same probability to be true than system $\# p - \# q \geq 5$. The limit probability is $P(Z_1 - Z_2 \geq 0)$ where $Z$ is a 2-dimensional normal distribution. Since $Z_1-Z_2$ is a standard normal distribution, $P(Z_1 - Z_2 \geq 0) = 1/2$.
\end{example}

\begin{example}
    With system $\# p - \# q \geq 5 \land  \# p -  \# r \geq 2$, we have matrix $A = \begin{pmatrix}
    1 & -1 & 0 \\
    1 & 0 & -1
    \end{pmatrix}$ and
    $\mu' = \begin{pmatrix}
    1/2 \\ 1/2 \\ 1/2
    \end{pmatrix}$, thus $A\mu' = \begin{pmatrix}
    0 \\ 0
    \end{pmatrix}$.
   
\end{example}

\subsection{Probability that are reached}

\newcommand\muTlimits{\muT_\infty}

In this section, we study the value of the limit probability $\muTlimits(\phi) = \lim_{n \rightarrow +\infty} \muT_n(\phi)$ that exists and can be reached by some $\Ksharp$-formula $\phi$.
Let us recall Sheppard’s Formula.

\begin{lemma}
[Sheppard’s Formula \cite{DBLP:journals/corr/abs-2105-10386}]. Let $Z_1, Z_2$ be standard 
1-dimension Gaussian random variables
with correlation $E[Z_1 Z_2] = \rho \in [-1, 1].$
Then
$$Pr[Z_1 \leq 0, Z_2 \leq 0] = \frac 1 2 - \frac 1 2 \frac {\arccos \rho} \pi.$$
\end{lemma}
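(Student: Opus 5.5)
The plan is to reduce the two-dimensional orthant probability to a one-dimensional integral over angles, using rotational invariance of the standard bivariate Gaussian. First I will represent the correlated pair by independent variables. Take $G_1, G_2$ independent standard Gaussians. Set $Z_1 = G_1$ and $Z_2 = \rho G_1 + \sqrt{1-\rho^2}\, G_2$. This pair has unit variances and $E[Z_1Z_2]=\rho$. A centred Gaussian vector is determined by its covariance matrix, so we may assume $(Z_1,Z_2)$ has exactly this form. Equivalently, write $Z_i = \langle u_i, G\rangle$ with $G=(G_1,G_2)$ and unit vectors $u_1=(1,0)$, $u_2=(\rho,\sqrt{1-\rho^2})$. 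The angle $\theta$ between $u_1$ and $u_2$ then satisfies $\cos\theta = \rho$, so $\theta=\arccos\rho\in[0,\pi]$.

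Second, I will pass to polar coordinates. The density of $G$ is rotationally invariant, so the direction $G/\|G\|$ is uniformly distributed on the unit circle, and $G=0$ has probability zero. The event $\{Z_1\le 0, Z_2\le 0\}$ depends only on this direction. It is the set of directions $w$ with $\langle u_1,w\rangle\le 0$ and $\langle u_2,w\rangle\le 0$, that is, the intersection of two closed half-circles. The two half-circles are bounded by the diameters orthogonal to $u_1$ and $u_2$.

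Third comes the geometric count, which is the only step needing care. Two half-planes whose inward normals $-u_1, -u_2$ make angle $\theta$ intersect in a cone of angle $\pi-\theta$. Indeed, each half-circle has arc length $\pi$. Rotating one boundary by $\theta$ removes $\theta$ from the overlap, so the common arc has length $\pi-\theta$. I will check the endpoint cases to fix orientation and sign:
\begin{itemize}
\item $\rho=1$ gives $\theta=0$ and overlap $\pi$, hence probability $1/2$;
\item $\rho=-1$ gives $\theta=\pi$ and overlap of measure zero, hence probability $0$;
\item $\rho=0$ gives a quarter circle, hence probability $1/4$.
\end{itemize}
Boundary arcs have measure zero, so strict versus non-strict inequalities do not matter.

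Finally, dividing the arc length by $2\pi$ gives
$$\Pr[Z_1\le 0,Z_2\le 0]=\frac{\pi-\arccos\rho}{2\pi}=\frac12-\frac12\frac{\arccos\rho}{\pi},$$
as claimed. The main obstacle is simply the angle bookkeeping in the third step. I will handle it by computing in the explicit frame $u_1=(1,0)$. In that frame the region is $\{\alpha : \cos\alpha\le 0,\ \cos(\alpha-\theta)\le 0\}$. Here $\cos\alpha\le 0$ means $\alpha\in[\pi/2,3\pi/2]$, and $\cos(\alpha-\theta)\le 0$ means $\alpha\in[\pi/2+\theta,3\pi/2+\theta]$. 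Their intersection is $[\pi/2+\theta,3\pi/2]$, which has length $\pi-\theta$.
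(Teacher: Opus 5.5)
Your proof is correct. The frame $u_1=(1,0)$, $u_2=(\cos\theta,\sin\theta)$ with $\theta=\arccos\rho$ works, and so does the uniformity of the angle of $G$. The resulting arc $[\pi/2+\theta,3\pi/2]$ has length $\pi-\theta$. Both arcs lie in the single window $[\pi/2,5\pi/2]$, so wrap-around does not arise. The paper gives no proof of this lemma: it imports it from the cited reference and uses it as a black box, so there is no competing argument to compare against. Yours is the standard rotational-invariance proof, the same one used in the random-hyperplane rounding analysis.

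One point deserves to be made explicit. Your step ``a centred Gaussian vector is determined by its covariance matrix'' assumes that $(Z_1,Z_2)$ is \emph{jointly} Gaussian. The lemma as worded only says that each $Z_i$ is a standard Gaussian with $E[Z_1Z_2]=\rho$. The stronger hypothesis is genuinely needed. For example, set $Z_2=Z_1$ on $\{|Z_1|\le c\}$ and $Z_2=-Z_1$ otherwise, and choose $c\approx 1.54$ so that $E[Z_1Z_2]=0$. Then $Z_2$ is standard Gaussian, yet $\Pr[Z_1\le 0,Z_2\le 0]=\Pr[-c\le Z_1\le 0]\approx 0.44\neq 1/4$. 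Joint Gaussianity is clearly the intended reading. It also holds where the paper applies the lemma, since there the pair is a linear image of a multivariate normal vector.
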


\begin{proposition}
For all numbers $x \in [0, 1/2]$, there is a sequence of a $\Ksharp$-formula $\phi_0, \phi_1, \dots$ that are systems with two inequalities such that $\muTlimits(\phi_n) \rightarrow x$.
\end{proposition}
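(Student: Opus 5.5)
The plan is to work only with formulas of modal depth $1$, where $\mu'_n = \muT_n$, and to reduce the limit to an orthant probability of a bivariate Gaussian, computed by Sheppard's Formula. Take four propositions $p,q,r,s$; under the uniform measure each successor independently satisfies each of them with probability $1/2$. For integers $a$ and $c\geq 0$, not both zero, consider the system
$$\phi_{a,c} := (\# p - \# q \geq 0) \land \bigl(a\,\# p - a\,\# q + c\,\# r - c\,\# s \geq 0\bigr).$$
Both linear forms have zero mean per successor, so $A\mu' = 0$ and we are in case 3 of the proof of \Cref{theorem:muprimeconverge}. The limit is therefore $\proba{\mathbf Z_1 \geq 0, \mathbf Z_2 \geq 0}$, where $\mathbf Z \sim \mathcal N_2(0, A\Sigma A^t)$.

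The covariance computation is routine. Per successor, $X_p - X_q$ has variance $1/2$ and is independent of $X_r - X_s$. Hence the two forms have variances $1/2$ and $(a^2+c^2)/2$, and covariance $a/2$, so the correlation is
$$\rho_{a,c} = \frac{a}{\sqrt{a^2+c^2}}.$$
By symmetry $(\mathbf Z_1,\mathbf Z_2)\mapsto(-\mathbf Z_1,-\mathbf Z_2)$ and Sheppard's Formula, the limit equals
$$\frac12 - \frac{\arccos \rho_{a,c}}{2\pi},$$
which ranges over $[0,1/2]$ as $\rho$ ranges over $[-1,1]$.

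Now fix $x \in [0,1/2]$ and set $\rho^* := \cos(\pi(1-2x)) \in [-1,1]$. The values $a/\sqrt{a^2+c^2}$ with $a \in \mathbb Z$, $c \in \setN$ are dense in $[-1,1]$, because they are the cosines of angles with rational tangent. So pick $(a_n, c_n)$ with $\rho_{a_n,c_n} \to \rho^*$ and put $\phi_n := \phi_{a_n,c_n}$. Continuity of $\arccos$ then gives $\muTlimits(\phi_n) \to x$.

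The step I expect to need the most care is justifying that $\lim_n \muT_n(\phi_{a,c})$ really equals the Gaussian orthant probability. Portmanteau requires the boundary of the orthant $\{z_1\geq 0, z_2 \geq 0\}$ to have $\mathbf Z$-measure zero. This holds when $|\rho|<1$, since then the Gaussian is nondegenerate and its boundary lines are null sets. To stay in that regime, I will choose all approximating pairs with $c_n \geq 1$, which is possible even when $\rho^* = \pm 1$. If the degenerate extreme cases must be covered exactly, they can be handled by hand. For $x=0$, the constant sequence $(\# p - \# q \geq 1)\land(\# q - \# p \geq 1)$ is unsatisfiable, so its value is $0$. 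For $x = 1/2$, the system $(\#p-\#q\geq 0)\land(\#p-\#q\geq 0)$ reduces to the one-dimensional CLT, which gives $1/2$ because the boundary $\{z=0\}$ is null for a nondegenerate one-dimensional normal.
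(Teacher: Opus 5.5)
Your proof is correct and takes the same route as the paper: reduce the limit to a bivariate Gaussian orthant probability, evaluate it with Sheppard's formula, then use density of the attainable correlations and continuity of $\arccos$. Your version is more careful than the paper's sketch: you choose rows with $A\mu' = 0$ so the Gaussian regime genuinely applies, you compute the correlation $a/\sqrt{a^2+c^2}$ explicitly (it agrees with the paper's $\frac{A_1\cdot A_2}{\|A_1\|\|A_2\|}$ here because the per-successor covariance of $(\# p,\# q,\# r,\# s)$ is a multiple of the identity), and you keep $c_n\geq 1$ so the Portmanteau boundary condition holds.
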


\begin{proof}
    Given a matrix $A$ with two rows $A_1$ and $A_2$ let us compute $P(AZ \geq 0)$. According to Sheppard's formula, $P(AZ \geq 0) = \frac 1 2 - \frac 1 2 \frac {\arccos \rho} \pi$ where $\rho = \frac {A_1 \cdot A_2}{||A_1|| ||A_2||}$.
    We can make approximate all values for $\rho$ between $[-1, 1]$ with arbitrary big matrices $A$. Since $\arccos\rho\in [0,\pi]$, all values of $[0, 1/2]$ can be approximated by $P(AZ \geq 0)$. 
\end{proof}

The probability $\muTlimits(\phi)$ over $K\Sharp$-formulas $\phi$ can get arbitrarily close to any number in $[0,1]$ as stressed by the following proposition.
    
\begin{proposition}
For all $x \in [0, 1]$, for all $\epsilon > 0$, there is a $\Ksharp$-formula $\phi$ such that $|\muTlimits(\phi) - x| < \epsilon$.
\end{proposition}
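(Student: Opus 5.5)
The plan is to build on the previous proposition, which already gives limit probabilities arbitrarily close to any $x \in [0,1/2]$ via two-inequality systems and Sheppard's formula. Two extensions are needed: reaching $(1/2,1]$, and doing it with a single formula.

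\textbf{Upper half by negation.} Since $\muTlimits(\lnot\phi) = 1 - \muTlimits(\phi)$ whenever the limit exists (because $\muT_n(\lnot\phi) = 1-\muT_n(\phi)$ for each $n$), the range $[1/2,1]$ follows from $[0,1/2]$. Given $x \in [1/2,1]$ and $\epsilon>0$, I would pick a formula $\phi$ with $|\muTlimits(\phi) - (1-x)| < \epsilon$ and take $\lnot\phi$.

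\textbf{Lower half from the previous proposition.} For $x \in [0,1/2]$, the previous proposition gives a sequence $\phi_k$ with $\muTlimits(\phi_k) \to x$, so some $\phi_k$ is $\epsilon$-close to $x$. To make this precise, I would fix the concrete family. Let $p,q,r$ be propositions of modal depth $0$, each true with probability $1/2$. Take the system $a_1\#p - b_1\#q \geq 0 \land a_2\#p - b_2\#r \geq 0$ with rows $A_1=(a_1,-b_1,0)$ and $A_2=(a_2,0,-b_2)$ chosen so that $A\mu'=0$, that is, $a_i = b_i$ up to the weights, so that we are in the Gaussian case. The covariance is $\Sigma = \frac14 I$ because $p,q,r$ are independent. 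The correlation of $A_1Z$ and $A_2Z$ is then $\rho = A_1\cdot A_2/(\|A_1\|\|A_2\|)$. With rows $(1,-1,0)$ and $(1,0,-1)$ this gives $\rho = 1/2$. To vary $\rho$, I would use more variables: rows $u-v$ and $u'-v'$ built from integer combinations over disjoint or shared propositions, where $u\cdot\mathbf 1 = v\cdot \mathbf 1$ ensures $A\mu'=0$. Integer vectors with zero coordinate sum have directions dense in the hyperplane $\mathbf{1}^\perp$, so $\rho$ ranges densely over $[-1,1]$. Sheppard's formula, using the symmetry $P(Z_1\ge0,Z_2\ge0)=P(Z_1\le0,Z_2\le0)$, then gives densely all values in $[0,1/2]$, and continuity of $\arccos$ converts density of $\rho$ into $\epsilon$-approximation of $x$.

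\textbf{Main obstacle.} The hardest step is that the paper's limit theorem is stated for $\mu'_\infty$, while this proposition concerns $\muTlimits$. For modal depth $1$ the two coincide (as observed before the definition), so I would restrict all constructions to depth-$1$ formulas over propositions. For these formulas $N_n$ is exactly a sum of i.i.d.\ vectors, and the multivariate CLT together with the Portmanteau argument (case 3 of the proof of \Cref{theorem:muprimeconverge}) applies directly. The Portmanteau step additionally needs the boundary of the cone $\{AZ\ge0\}$ to have Gaussian measure zero. This holds because $A\Sigma A^t$ is nondegenerate when $A_1,A_2$ are not parallel, which I would ensure; the endpoints $\rho=\pm1$ are only approached, not attained.
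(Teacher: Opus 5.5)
Your argument is correct, but it is not the paper's route. The paper does not use two-dimensional Gaussians here at all. It approximates $x$ by a dyadic rational $k/2^m$ and takes the depth-one formula $\bigvee_{j=1}^{k}\bigwedge_{i=1}^{m}(\#p_{i,j}\le\#q_{i,j})$ over pairwise distinct propositions. It relies only on the one-dimensional fact that each comparison has limit probability $1/2$, and that comparisons on disjoint propositions are exactly independent for every $n$. This is more elementary than your approach, since it needs no multivariate CLT, Portmanteau or Sheppard's formula, and it covers $[0,1]$ directly without negation.

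Be aware, though, that the paper's step ``the ORs add the probabilities up'' only holds for mutually exclusive disjuncts. For these independent disjuncts the limit is actually $1-(1-2^{-m})^{k}$, not $k/2^m$. This is still dense in $[0,1]$, since consecutive values differ by at most $2^{-m}$. Exact dyadic values can be recovered by instead taking a disjunction of $k$ distinct atoms $\bigwedge_{i=1}^{m}\ell_i$, with each $\ell_i$ either $\#p_i\le\#q_i$ or its negation.

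Your route instead shows that a single system of two inequalities over three propositions, or its negation, already suffices. It also makes rigorous several points that the paper's Sheppard-based proposition leaves implicit:
restricting to modal depth $1$, so that $\mu'_n$ and $\mu^{\mathcal T}_n$ coincide and $N_n$ is a genuine i.i.d.\ sum with $\Sigma=\frac14 I$;
choosing integer rows with zero coordinate sum, so that $\{AN_n\ge 0\}=\{AZ_n\ge 0\}$ exactly, with no $b/\sqrt n$ correction;
checking that the orthant boundary is Gaussian-null for the Portmanteau step.
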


\begin{proof}
    We prove the statement using the fact that the set of dyadic rationals $\{\frac{k}{2^m}\mid k\in\mathbb{N}\union\{0\}, m\in\mathbb{N}, 0\leq k\leq 2^m\}$ is dense in $[0,1]$. So for all $x \in [0, 1]$, for all $\epsilon > 0$, there is $m, k$ such that $|x - \frac k {2^m}| < \epsilon$. We now give a formula $\phi$ with $\muTlimits(\phi) = \frac k {2^m}$. Consider $$\phi := \lbigor_{j=1..k} \lbigand_{i=1..m} (\# p_{i,j} \leq \# q_{i,j})$$
    where
    $p_{i,j}$ and $q_{i, j}$ are distinct propositional variables. Since each propositions are independent and $\muTlimits(\# p_{i,j}\geq \# q_{i,j}) = 1/2$, the ANDs product the probabilities and the ORs adds the probabilities up.
\end{proof}

\begin{conjecture}
$\lim_{n \rightarrow +\infty} \mu'_n(\phi)$ is equal to some adherence point of $(\muT_n(\phi))_{n \in \mathbb N}$.
\end{conjecture}

%% file: section_conclusion.tex

This work opens the path of studying convergence on general graphs and not only trees, as done in \Cref{section:convergence}.
We finish our work by listing some open problems.

\begin{itemize}
\item How to decide when $\mu_n(\phi)$ converges? Is it decidable? What is the complexity of that decision problem? Are there some approximation algorithms? These questions are relevant to solve verification problems of GNNs for large graphs.
In fact, we conjecture that a fragment, we call majority logic, with a \emph{boolean bias} condition can have 0-1 law. (for more details, see Appendix)

\item How to put proportion information in the logic and construct a proof system for reasoning about proportions of models? The answer to this research question may provide a tool for giving justifications of some verification verdict.
  \item How to compute conditioned proportion? For instance how to compute the proportion of $\phi$-models among the $\psi$-models? This is relevant for more involved verification tasks. 
\end{itemize}

%% file: appendix.tex
\section*{Convergence to 0 of $\mu_{2n-1}(\# \phi_e \geq \# \phi_o)$}
\label{appendix:section:limitoddfrenchambulance}


\begin{theorem}
    \[
        \lim_{n \to \infty} \left( 1 - \frac{\binom{2n-1}{n}}{2^{2n-1}} \right)^{2n-1} = 0
    \]
\end{theorem}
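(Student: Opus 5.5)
We need $(1-p_n)^{2n-1}\to 0$, where $p_n = \binom{2n-1}{n}/2^{2n-1}$. The idea is to show that $p_n$ decays only like $1/\sqrt{n}$, while the exponent grows linearly in $n$.

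First, simplify $p_n$. Since $\binom{2n-1}{n} = \frac12\binom{2n}{n}$, we get $p_n = \binom{2n}{n}/4^n$.

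Next, bound $p_n$ from below. The standard elementary inequality is
\[
\binom{2n}{n} \geq \frac{4^n}{2\sqrt{n}}.
\]
It follows by induction from the ratio $\binom{2n+2}{n+1}/\binom{2n}{n} = \frac{2(2n+1)}{n+1}$. In the inductive step one must check that $\frac{2n+1}{2(n+1)} \geq \frac{\sqrt n}{\sqrt{n+1}}$. Squaring, this becomes $(2n+1)^2 \geq 4n(n+1)$, i.e. $1 \geq 0$. Alternatively, Stirling's formula gives $p_n \sim 1/\sqrt{\pi n}$. Either way, $p_n \geq \frac{1}{2\sqrt n}$ for all $n\geq 1$.

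Finally, use $1-x \leq e^{-x}$ for $x\in[0,1]$. Note $p_n\leq 1$, since it is a probability. This gives
\[
0 \leq (1-p_n)^{2n-1} \leq \exp\bigl(-(2n-1)p_n\bigr) \leq \exp\Bigl(-\frac{2n-1}{2\sqrt n}\Bigr).
\]
The exponent $\frac{2n-1}{2\sqrt n}\geq \frac{\sqrt n}{2}$ tends to $+\infty$, so the right-hand side tends to $0$. Squeezing yields the claim.

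The only nontrivial step is the lower bound on the central binomial coefficient. The inductive argument above is self-contained and avoids having to cite Stirling's formula, so I would present that version.
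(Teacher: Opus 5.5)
Your proof is correct. It follows the same skeleton as the paper's: rewrite $p_n$ as $\binom{2n}{n}/4^n$, show it decays only like $n^{-1/2}$, and let the linear exponent win. The two key tools differ, however.

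The paper uses Stirling's formula to get $\binom{2n}{n}/4^n \sim 1/\sqrt{\pi n}$. It then substitutes this into the expression, takes the logarithm of $L_n = (1 - 1/\sqrt{\pi n})^{2n-1}$, and expands $\ln(1-x) = -x + O(x^2)$ to reach $\ln L_n \sim -2\sqrt{n/\pi}$.

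You instead prove the non-asymptotic bound $\binom{2n}{n} \geq 4^n/(2\sqrt n)$ by induction. The base case $n=1$ holds with equality, and your reduction of the inductive step to $(2n+1)^2 \geq 4n(n+1)$ is right. Combined with $1-x \le e^{-x}$, this gives the explicit estimate $(1-p_n)^{2n-1} \le e^{-\sqrt n/2}$ for every $n \ge 1$.

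What your route buys is self-containment and full rigor. The paper writes $\binom{2n}{n}/2^{2n} = 1/\sqrt{\pi n}$ as an equality and then analyses a different sequence $L_n$. That implicitly relies on the easy but unstated fact that replacing $p_n$ by an asymptotically equivalent quantity does not affect $(2n-1)p_n \to \infty$. Your squeeze argument needs no such asymptotic bookkeeping.

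In exchange, the paper's route gives the exact first-order rate of $\ln L_n$. Yours captures it only up to a constant factor, which is of course enough for the limit.

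One minor point: the Stirling alternative you mention gives $p_n \ge 1/(2\sqrt n)$ only for large $n$ (because $1/\sqrt{\pi} > 1/2$), not for all $n \ge 1$. That suffices for a limit, and you rightly opt for the inductive version.
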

\begin{proof}

Note that:

\begin{align*}
    \binom{2n-1}{n} &= \frac{(2n - 1)!}{n!\times(n-1)!}\\
                    &= \frac{n}{2n}\frac{(2n)!}{n!\times n!}\\
                    &= \frac{1}{2} \binom{2n}{n}
\end{align*}

Hence the statement expression becomes

$$
\left(1 - \frac{\binom{2n-1}{n}}{2^{2n-1}}\right)^{2n-1} = \left(1 - \frac{\binom{2n}{n}}{2^{2n}}\right)^{2n-1}
$$

Now we approximate the expression asymptotically. Note that by Stirling's approximation~\cite{DBLP:books/aw/GKP1994}, $\lim_{n\to\infty} n!\sim \sqrt{2\pi n}(\frac{n}{e})^n$. Hence,
\[
\binom{2n}{n} = \frac{(2n)!}{(n!)^2} \sim \frac{\sqrt{4\pi n} \left(\frac{2n}{e}\right)^{2n}}{\left( \sqrt{2\pi n}\left(\frac{n}{e}\right)^n \right)^2} = \frac{2^{2n}}{\sqrt{\pi n}}
\]

Therefore, as $n \to \infty$:
\[
\frac{\binom{2n}{n}}{2^{2n}} = \frac{1}{\sqrt{\pi n}} 
\]


As $n\to\infty$, let 
$$L_n = \left(1 - \frac{\binom{2n}{n}}{2^{2n}}\right)^{2n-1} = \left(1 - \frac{1}{\sqrt{\pi n}} \right)^{2n-1}$$

Taking logarithms on both sides:
$$
\ln L_n = (2n-1)\ln\left(1 - \frac{1}{\sqrt{\pi n}}\right)
$$

As $n\to\infty$, $\frac{1}{\sqrt{\pi n}}\to 0$. Also, as $x\to 0$, by Taylor expansion,
$$
    \ln(1 - x) = -x + O(x^2)
$$
Hence,
\begin{align*}
    \lim_{n\to\infty}\ln L_n &= \lim_{n\to\infty} (2n-1)\ln\left(1 - \frac{1}{\sqrt{\pi n}}\right)\\
                             &= \lim_{n\to\infty} (2n-1)\left(-\frac{1}{\sqrt{\pi n}} + O\left(\frac{1}{n}\right)\right)\\
                             &\sim \lim_{n\to\infty} -\frac{2n-1}{\sqrt{\pi n}}\\
                             &\sim \lim_{n\to\infty} -\frac{2\sqrt{n}}{\sqrt{\pi}}\\
                             &\sim -\infty
\end{align*}
Therefore
$$
\lim_{n\to\infty} L_n = e^{-\infty} = 0
$$



\end{proof}



\section{Computations}

\subsection{Computation of proportion of pointed graphs for $\# p \geq \# q$}

The number of graphs of size $n$ is $4^n \times 2^{n^2}$.

The number of configurations for the successors of a point to satisfy $\#p \geq \#q$ is

$$\sum_{S \subseteq \set{1, \dots, n}} N_\geq(\# S)$$

where $N_\geq(m)$ is the number of ways to put subsets $P, Q$ in a set of size $m$ and with $\# P \geq \# Q$. \subsection{Computation of proportion of pointed graphs for $\# p \geq \# q$}

$$N_\geq(m) = \frac{4^m + \binom{2m}m} 2.$$

The number of configurations is:

$$\sum_{m=0}^n \binom n m \frac{4^m + \binom{2m}m} 2$$

We now also have to consider the choices elsewhere. So the number of models satisfying $\#p \geq \#q$ is:

$$\sum_{m=0}^n \binom n m \frac{4^m + \binom{2m}m} 2 \times 4^{n-m} \times 2^{(n-1)n}$$

So the probability that $\#p \geq \#q$ holds in a pointed graph of size $n$ is:

$$\frac{\sum_{m=0}^n \binom n m \frac{4^m + \binom{2m}m} 2 \times 4^{n-m} \times 2^{(n-1)n}}{4^n \times 2^{n^2}}.$$

$$\sum_{m=0}^n \binom n m \frac{4^m + \binom{2m}m} {4^m \times 2^{n+1}}.$$

\subsection{Computation of proportion of pointed trees for $\# p \geq \# q$}

We compute the probability of satisfying $\# p\geq\# q$ in tree model from the root with number of branching at most $B$. 

Also, since truth depends on modal depth, we do not care about depth.

Number of trees of at most $B$ branching with two propositions: 
$$
4\times\sum_{b = 0}^B (4^b)
$$

From the current node, we just count that given number of branches $b$, how many ways can their be valuations such that number of positive $\# p$ is at least $\# q$ just like above:
$$
4\times\sum_{b = 0}^B\sum_{P\subseteq [0, b]}(\sum_{Q\subseteq [0,b], |P|\geq |Q|} 1)
$$

By above we have 
$$
\sum_{P\subseteq [0, b]}(\sum_{Q\subseteq [0,b], |P|\geq |Q|} 1) = \frac{4^b + \binom{2b}{b}}{2}
$$

Hence
$$
\mu_B(\# p\geq\# q) = \frac{4\times\sum_{b = 0}^B\frac{4^b + \binom{2b}{b}}{2}}{4\times\sum_{b = 0}^B (4^b)}
$$

\section{A Condition for 0-1 Law for Majority Logic}

In this section, we introduce a fragment of $\Ksharp$ that we call
Majority Logic. The restriction is as follows. The inequations we allow are of the form $\# \phi \geq \# \psi$. Majority logic does not have 0-1 law (e.g. $\muT_n(\# p \geq \# q) \convergesn 1/2$). But we introduce a constraint, called \emph{full Boolean bias}, that gives majority logic 0-1 law back.

\subsection{Boolean Bias Condition}
The idea is to resolve all the Boolean bias.
%
Given $\proba{p_1}, \dots, \proba{p_n}$, we compute the probability of $\proba{\chi}$ for a Boolean formula $\chi$ by assuming that the truth values of $p_1, \dots, p_n$ are independent.

\begin{definition}
    $\proba{p_1}, \dots, \proba{p_n}$ has \emph{full Boolean bias} iff for all Boolean formulas $\chi_1, \chi_2$, 
    if $\proba{\chi_1} = \proba{\chi_2}$ then
    $\chi_1 \equiv \chi_2$.
\end{definition}

\begin{example}[no full Boolean bias]
    If $\proba p = 1/3$ and $\proba q = 1/4$, then there is a no full Boolean bias because:
    \begin{itemize}
        \item  $\proba {p\land q} = 1/3 \times 1/4$
        \item  $\proba {p\land \lnot q} = 1/3 \times 3/4$
        \item  $\proba {\lnot p\land \lnot q} = 2/3 \times 3/4 = 1/2$ (no bias)
        \item  $\proba {p\lor q} = 1 - 1/2 = 1/2$
    \end{itemize}
\end{example}

\begin{example}[full Boolean bias]
    If $\proba p = 1/3$ and $\proba q = 1/5$, then there is a full Boolean bias because:
    \begin{itemize}
        \item  $\proba {p\land q} = \proba p \times \proba q = 1/3 \times 1/5$
        \item  $\proba {p\land \lnot q} = 1/3 \times 3/5$
        \item  $\proba {\lnot p\land \lnot q} = 2/3 \times 3/5 = 2/5$
        \item  $\proba {p\lor q} = 1 - 2/5 = 3/5$
    \end{itemize}
\end{example}

The following is our conjecture:
\begin{conjecture}
Under the condition \emph{full Boolean biais}, for all $\phi$ of majority logic, 
$$\lim_{n \rightarrow +\infty}\mu_n(\phi) \in \set{0, 1}.$$
\end{conjecture}